\title[A Direct Sum Result for the Information Complexity of Learning]{A Direct Sum Result for the \\ Information Complexity of Learning}
\DeclareMathAlphabet{\mathcal}{OMS}{cmsy}{m}{n}
\newcommand{\HH}{\mathrm{H}}
\newcommand{\E}{\mathbb{E}}
\newcommand{\EE}[1]{\underset{#1}{\mathbb{E}}}
\newcommand{\mc}[1]{\mathcal{#1}}
\newcommand{\mb}{\mathbb}
\newcommand{\mr}{\mathrm}
\newcommand{\cH}{\mc{H}}
\newcommand{\cX}{\mc{X}}
\newcommand{\cY}{\mc{Y}}
\newcommand{\cA}{\mc{A}}
\newcommand{\cT}{\mc{T}}
\newcommand{\cS}{\mc{S}}
\newcommand{\II}[1]{\mathrm{I}\left(#1\right)}
\newcommand{\I}[2]{\underset{#1}{\mathrm{I}}\left(#2\right)}
\newcommand{\supp}{\mathrm{supp}}
\newcommand{\IC}{\mathsf{IC}}
\newcommand{\VC}{\mathsf{VC}}
\newcommand{\nin}{\noindent}
\newtheorem{notation}{Notation}[section]
\newtheorem{claim}{Claim}[section]
\pgfplotsset{compat=1.12}
\begin{document}

\maketitle

\newcommand{\an}[1]{\textcolor{blue}{#1}}
\newcommand{\ann}[1]{\textcolor{red}{#1}}

\begin{abstract}
How many bits of information are required to PAC learn a class of hypotheses of VC dimension $d$? 
The mathematical setting we follow  is that of Bassily et al., where the value of interest is the mutual information $\II{S;A(S)}$ between the input sample $S$ and the hypothesis outputted by the learning algorithm $A$. We introduce a class of functions of VC dimension $d$ over the domain $\cX$
with information complexity at least $\Omega \left(d\log \log \frac{|\cX|}{d}\right)$ bits for any consistent and proper algorithm (deterministic or random).  
{Bassily et al.\ proved a similar (but quantitatively weaker) result for the case $d=1$.}

The above result is in fact a special case of a more general phenomenon we explore. We define the notion of {\em information complexity} of a given
class of functions $\cH$. Intuitively, it is the minimum amount of information that an algorithm for $\cH$ must retain about its input to ensure consistency and properness. We prove a direct sum result for information complexity in this context; roughly speaking, the information complexity sums when combining several classes.\\
\end{abstract}

\begin{keywords}
PAC Learning, Information Theory, VC Dimension, Direct Sum.
\end{keywords}

\section{Introduction}

A simple, interesting and central observation in computational learning theory suggests that {\em learning} and {\em information compression} are closely related tasks, and in some sense are equivalent: In order to compress a dataset, one needs to identify patterns or regularities that exist in the data, and leverage them to construct a representation that is more concise than the verbatim description of the data. Similarly, learning involves identifying patterns or regularities in the training data, usually for the purpose of making predictions about some future data that is expected to exhibit similar patterns.

This intuition has been formalized a number of times in various ways;
for example, via sample compression schemes, Occam's razor, and minimum description length
(see Section~\ref{related_work}). The observation is fruitful not least because it enables borrowing tools from the study of compression, which are often combinatorial or information-theoretic in nature, and applying them to study statistical notions of learning. This paper presents a result pertaining to the {\em information complexity} of hypothesis classes, which is yet another formalization. 

The information complexity view roughly goes as follows. Consider a learner that is given some training data, and will later need to make predictions about new data that it has not yet seen. Once the learner is successful in identifying the underlying patterns in the data, we expect it will be able to keep just a small amount of information which represents these patterns, discard the rest of the training data, and still be successful in making predictions about future instances. Thus, it is natural to ask how much information the learner retains from the training set.

This question was first introduced in \cite{bassily2018learners}, where it is formalized as follows (see Section~\ref{section-preliminaries} for definitions and notation). Let %$\cH$ be a class of hypotheses, 
$S$ be a random variable representing the i.i.d.\ training samples for a supervised learning algorithm %for $\cH$, 
and let $h$ %\in\cH$ 
represent the hypothesis that the algorithm outputs on input $S$. Consider the mutual information $\II{S;h}$, which quantifies the amount of information that the algorithm retains about the input when making predictions. The authors show that
\[
\Pr \left[ \left|\text{true error} - \text{empirical error}\right| > \varepsilon \right] 
< O\left(\frac{\II{S;h}}{m\varepsilon^2}\right) .
\]
where $m$ is the number of samples in the training set. Thus, if the mutual information grows slowly compared to $m$, say $\II{S;h} = o(m)$, then the algorithm generalizes well, meaning that it does not over-fit. If in addition the empirical error vanishes then the algorithm PAC learns. Similar results were also obtained by \cite{xu2017information}.

Again, this makes sense. If the mutual information does not grow a lot slower than $m$, then the algorithm is basically memorizing large portions of the training set, and therefore its output is unlikely to generalize to unseen instances.  Conversely, if the mutual information is small compared to $m$ then the algorithm cannot tailor its output to the noisy details of specific training instances, and so it cannot over-fit.

More generally, given a hypothesis class $\cH$, we define the information complexity $\IC(\cH)$ to be the least amount of information that the output of an algorithm for $\cH$ must retain to ensure consistency and properness. 
{Here is a brief and rough overview of the formalism behind this notions
(see Definition~\ref{definition-IC} for the full details).
The {\em information cost} of a consistent and proper learning algorithm $A$ for $\cH$
is $\IC_{A}(\cH) = \sup_p \II{S;A(S)}$ where $p$ is a distribution on inputs and $\mathrm{I}$ denotes mutual information (here the sample size $m$ is fixed). 
In words, it is the maximum amount of information the output of $A$ contains
over all distributions on inputs.
The {\em information complexity} of $\cH$
is $\IC(\cH) = \inf_A \IC_A(\cH)$.
}

This definition is the main object of study in this work. As the name suggests, 
the definition is inspired by {similar notions in computational
complexity theory, like} the concept of information complexity in the field of communication complexity \citep[see e.g.][]{braverman2012interactive}.
{Loosely speaking, algorithms have costs and the minimum cost 
for a given problem is the complexity of the problem.}

{The information complexity of a learning problem seems related
to important properties of the problem, like the sample size needed to perform learning,
and may yield new and useful learning paradigms,
which aim at minimizing the amount of information algorithms use.
It is also related to several standard notions in learning theory
(see Section~\ref{related_work} below).}
We therefore set out to explore this measure and understand how it works. 

One question that arises immediately inquires how, if at all, does information complexity relate to the Vapnik--Chervonenkis (VC) dimension, which is the standard measure of complexity in learning theory. VC dimension is important because a hypothesis class is PAC-learnable if and only if its VC dimension is finite, and furthermore, when learning is possible then the VC dimension determines the sample complexity \citep{vapnik1971uniform, blumer1989learnability}. 

This work makes a first step towards understanding information complexity, and its relation to the VC dimension, by proving the following theorem.

\begin{theorem}[\textbf{Lower bound for VC classes}]\label{general-lower-bound} 
There exists a family of hypothesis classes $\{\cH_{k,d}:\: k,d,n \in \mathbb{N}, k=d2^n\}$ where $\cH_{k,d}\subseteq \{0,1\}^{[k]}$ and $\VC(\cH_{k,d})=d$
such that
\[
\IC(\cH_{k,d})=\Omega\left(d\log \log \left(k /d\right)\right)=\Omega\left(d\log n\right) .
\]
\end{theorem}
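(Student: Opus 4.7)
The plan is to prove the theorem by combining two ingredients: a sharp information-complexity lower bound for a base class of VC dimension $1$, and a direct sum inequality that amplifies this bound by a factor of $d$.

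\textbf{Base case.} I would first construct a class $B_n \subseteq \{0,1\}^{[2^n]}$ with $\VC(B_n)=1$ and $\IC(B_n)=\Omega(\log n)$, strengthening the quantitatively weaker $d=1$ bound of Bassily et al. Natural candidates include point-indicator classes (singletons) or threshold-like classes on $[2^n]$. The lower bound proceeds by exhibiting a hard distribution $p$ on labeled samples under which many hypotheses in $B_n$ remain consistent with a typical $S$, thereby forcing any consistent proper learner to commit to one hypothesis among a large set of candidates. The information-theoretic heart of the argument is to show that this commitment, viewed as a channel from $S$ to $A(S)$, must carry $\Omega(\log n)$ bits --- pushing the analysis of Bassily et al.\ so as to extract the full logarithmic scaling in the domain size.

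\textbf{Direct sum amplification.} I would then define $\cH_{k,d}$ to be the ``disjoint union'' class $B_n^{\oplus d}$: partition the domain $[k]$ into $d$ blocks of size $2^n$, and let $\cH_{k,d}$ consist of all functions whose restriction to the $i$-th block lies in a copy of $B_n$. This gives $\VC(\cH_{k,d})=d$, since VC dimension is additive under disjoint unions of classes. The key inequality
\[
\IC(B_n^{\oplus d}) \;\geq\; d \cdot \IC(B_n)
\]
is the direct sum phenomenon announced in the abstract. Given a consistent proper algorithm $A$ for $\cH_{k,d}$, the proof feeds $A$ a product of hard distributions for $B_n$, uses the chain rule of mutual information together with the independence of the coordinates to decompose $\II{S;A(S)}$ into a sum of per-block contributions, and verifies that each contribution corresponds to a valid consistent proper randomized algorithm for $B_n$ (with randomness inherited from the other blocks and from $A$). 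A minimax- or averaging-style argument is likely needed to handle the fact that the optimal hard distribution for $B_n$ may depend on the marginal algorithm induced by $A$. Combining the two steps yields $\IC(\cH_{k,d}) \geq d\cdot\Omega(\log n) = \Omega(d\log\log(k/d))$, as required.

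\textbf{Main obstacle.} I expect two nontrivial technical hurdles. The first is the base-case lower bound: designing the class and the hard distribution so that no clever ``low-information'' consistent proper learner (e.g., one that always outputs a canonical consistent hypothesis) can evade the $\Omega(\log n)$ floor; this is exactly where a finer argument than that of Bassily et al.\ is required. The second is making the direct sum step precise when the optimal hard distribution depends on the induced marginal algorithm; this is the familiar difficulty in direct sum proofs in communication complexity and typically requires either a minimax/convexity argument or a careful averaging over distributions.
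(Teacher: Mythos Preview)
Your plan is essentially the paper's proof: the base class is thresholds $\cT_n$, the class $\cH_{k,d}$ is their $d$-fold disjoint-union product $\cT_{n,d}$, and the amplification proceeds via a chain-rule decomposition plus the minimax step you anticipate (the paper applies Sion's theorem over $\Delta(\Delta_{\cT_n})$, i.e.\ distributions on distributions, precisely because---as Bassily et al.\ already noted---no single distribution on labeled examples is hard for all threshold learners).

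The one technical wrinkle you have not flagged: after splitting the full sample $S$ into the subsamples $S_1,\dots,S_d$ landing in each block, the size $M_i=|S_i|$ is a random binomial variable, not a fixed integer. Hence the per-block ``induced algorithm'' is not a learner for $\cT_n$ with a fixed sample size but a family $(A_t)_t$ indexed by $t$, and the base threshold lower bound must be established in the stronger form $\EE{t\sim\mu}\,\I{S\sim p^t}{S;A_t(S)}=\Omega(\log n)$ for an arbitrary distribution $\mu$ on an interval of sample sizes. The paper does exactly this (its Lemma~\ref{lemma-lower-bound-for-threshold-randomized-sample-size}), then conditions on $M_i\in[m/2,3m/2]$ via a Chernoff bound so that the relevant $\mu$ is the truncated binomial. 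Without this refinement the direct-sum step does not close, since the hard distribution for $\cT_n$ at one sample size need not be hard at another.
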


This theorem shows that in some cases learners must retain or leak a large amount of information about their inputs, even for classes of low VC dimension. The theorem applies to both deterministic and randomized algorithms. A weaker variant of it, with $d=1$, already appeared in \cite{bassily2018learners}. It also provides a separation between information complexity and sample compression schemes (as explained below).

The theorem is in fact an instance of a much more general direct-sum-type phenomenon. In a nutshell, direct sums in computational complexity theory refer
to the behavior of the computational complexity when problems are combined. For example, how does the complexity of completing two tasks relate to the complexity of completing each of the tasks separately. It is a central questions that appears in boolean formula complexity \citep[see e.g.][]{karchmer1995super}, communication complexity \citep[see e.g.\ Section 4.1 in][]{kushilevitz1997communication}, and more.

Here we describe the statement in rough terms (for formal details see Section~\ref{direct-sum}).
Given two concept classes $\cH_1 \subseteq \{0,1\}^{\cX_1}$
and $\cH_2 \subseteq \{0,1\}^{\cX_2}$, define the product class
$\cH_1 \times \cH_2$ as the class of functions over the disjoint union of $\cX_1$ and $\cX_2$
that are obtained by combining some $h_1 \in \cH_1$ and $h_2 \in \cH_2$.
The main question we address is how does the information complexity
of $\cH_1 \times \cH_2$ relate to that of $\cH_1$ and $\cH_2$.
It is natural to {conjecture} that the information complexity sums;
namely, $$\IC(\cH_1 \times \cH_2) \approx \IC(\cH_1) + \IC(\cH_2).$$
We prove that this is indeed the case.
%(in a certain formal setting).

%In the process of proving the theorem, we also present a direct sum result for information complexity (Lemma~\ref{lemma_additivity}) which is interesting in its own right.

\subsection*{Related work}\label{related_work}

Connections between learning and compression have been studied extensively. 
The minimum description length principle developed by Rissanen {and others} is one important avenue \citep{rissanen1978modeling, grunwald2007minimum}, as is Solomonoff induction, which relates to compression via Kolmogorov complexity (\citealt{solomonoff1964formalA, ming1997introduction}; see also \citealt{hutter2007universal}). Below we discuss two seminal results relating learning and compression which we find particularly pertinent.

\textbf{Sample compression schemes.} The concept of information complexity was first conceived as an attempt to generalize sample compression schemes, which constitute a well-known connection between learning and compression. Sample compression schemes are a class of learning algorithms whose output hypothesis is determined by a small subsample of the input. The classic example is support vector machines, which output a separating hyperplane that is determined by a small number of support vectors. \cite{littlestone1986relating} introduced sample compression schemes, and showed that every sample compression scheme is a PAC learner. Recently, \cite{moran2016sample} resolved a longstanding open question and showed that the converse also holds: every hypothesis class of finite VC dimension has a compression scheme. Together, these two results show one way in which learning and compression are equivalent. The present work extends \cite{bassily2018learners} in showing that learning with small mutual information and sample compression schemes are different.

\textbf{Occam's razor.} Another classic connection between learning and compression was provided by \cite{blumer1987occam}. They assume some fixed encoding of the hypotheses in a class $\cH$, and define the complexity of a hypothesis to be the length of its encoding. Roughly, they show that if an algorithm always outputs a consistent and relatively simple hypothesis then the algorithm 
generalizes. More explicitly, they show a bound on the sample complexity under the condition that the output hypothesis is of some given complexity. 
%at most $n^cm^{\alpha}$, where $n$ is the complexity of the correct hypothesis, $m$ is the sample size, and $c\geq 1, \alpha \in [0,1)$ are some fixed constants. 
Since information complexity is a lower bound on the entropy of the output hypothesis, which itself is a lower bound on the length of its encoding, the lower bound for information complexity proved herein carries over also to the setting of Occam's razor.

\textbf{Differential privacy.} 
The mutual information $\II{S;A(S)}$ can also be thought of as the amount of information the learning algorithm reveals about its input, which calls to mind the setting of differential privacy. First introduced by \cite{dwork2006calibrating}, differential privacy is a principled notion of privacy that has recently been studied extensively because it provides strong privacy guarantees to data sources in a rigorous way. It also plays a roll in controlling overfitting, as several recent works  have shown \cite[e.g.][]{dwork2015preserving, bassily2016algorithmic, rogers2016max, bassily2014private}. Specifically, \cite{bassily2016algorithmic} provides an analysis of differential privacy as a form of distributional stability, and provides a tight characterization of the generalization guarantees that differential privacy entails.
The ideas presented in this work
may carry over to show direct sums in differential privacy.

\subsection*{Proof Outline}\label{Proof Sketch}

Here we provide an outline of the proof's structure.
The proof consists of four parts.

In section \ref{lower-thresholds}, we consider the class of thresholds over 
the domain $\cX$. We improve upon the lower bound in Theorem 5.1 of \cite{bassily2018learners}, and show that for every sample size $m$ and every consistent and proper learning algorithm, there exists a distribution and a threshold function such that $\II{S;A(S)}=\Omega ( \log \log | \cX | )$. This corresponds to the case $d=1$ in Theorem~\ref{general-lower-bound}.

In section \ref{direct-sum}, we define the direct sum of classes of functions $\cH_1,\ldots,\cH_d$. Every function in the new class is the concatenation of $d$ functions, one from each class. In particular, we are interested in the direct sum of $d$ classes of threshold functions.  The new class has VC dimension $d$ and is denoted $\cT_{k,d}$.

{Then,} %In section \ref{proof direct}, 
we prove that (under certain conditions) the information complexity of the direct sum of $d$ classes is roughly the sum of their information complexities. This is the main technical contribution of this work. It harnesses {Sion's generalization of} von Neumann's minimax theorem in a somewhat surprising way; instead of considering the space of distributions over the domain of interest, we need to move to the space of distributions over distributions.
Finally, in section \ref{VC bound}, we verify that the relevant conditions hold for $\cT_{k,d}$ and conclude our proof. 
%This requires using Sion's minimax theorem, a general version of von Neumann's minimax theorem. 

\section{Preliminaries}\label{section-preliminaries}

\subsubsection*{Standard notation}

%\begin{notation}
%	$\mb{N} = \{1,2,\dots\}$.
%\end{notation}

%\begin{notation}
%	Let $n \in \mb{N}$. $[n]$ denotes the set $\{1,2,\dots,n\}$.
%\end{notation}

\begin{notation}\label{definition-delta}
	Let $\cX$ be a set. The support of a probability function $p:\cX \to [0,1]$ is the subset of elements of $\cX$ that have a positive probability, $\supp(p)=\{x\in \cX:\: p(x)>0\}$. We use $\Delta(\cX)$ to denote the set of all probability mass functions over $\cX$ that have a finite support. If $p$ is a probability function, we use $p^m$ to denote the probability function over $\cX^m$ that corresponds to the probability of performing $m$ i.i.d.\ samples from $\cX$ according to $p$: $p^m((x_1,\cdots,x_m)) = \Pi_{i=1}^m p(x_i)$.
\end{notation}
%
%\begin{notation}
%	Let $f,g:\:\mathbb{R}^n\rightarrow\mathbb{R}$ be functions. We write $f(x_1,\dots,x_n) = \Omega(g(x_1,\dots,x_n))$ to denote the following assertion: There exist constants $\alpha,\beta,t >0$ such that for any vector $(x_1,\dots,x_n)$ where $x_i\geq t$ for all $i\in[n]$, it holds that $f(x_1,\dots,x_n) \geq \alpha\cdot g(x_1,\dots,x_n) - \beta$.
%\end{notation}

\subsubsection*{Information theory}

\begin{definition}
	Let $\cX$ be a countable set, and let $X$ be a random variable over $\cX$ with probability mass function $p$ such that $p(x)=\Pr(X = x)$. The {entropy of $X$} is\footnote{$\log (x)$ is a shorthand for $\log_2(x)$, and we use the convention that $0\log\frac{1}{0}=0$.}
	%denoted $\HH(X)$ and defined
	%\[
$\HH(X) = \sum_{x \in \cX} p(x)\log\frac{1}{p(x)}$.
\end{definition}

%Intuitively, $\HH(x)$ is the amount of information that $X$ contains, measured in bits. 
%More formally, Shannon's source coding theorem states that the entropy of $X$ is a lower bound on the expected number of bits per sample that are necessary for a lossless encoding of a sequence of $n$ i.i.d.\ samples of $X$, and that by taking large enough values of $n$ one may construct codes that come arbitrarily close to this number.

\begin{definition}
	Let $X$ and $Y$ be random variables over countable sets $\cX$ and $\cY$ respectively. The mutual information between $X$ and $Y$ is $\II{X;Y} = H(X)+H(Y)-H(X,Y)$.
\end{definition}

See the textbook \cite{cover2012elements} for additional basic definitions and results from information theory which are used throughout this paper.

\subsubsection*{Learning theory}

Part I of \cite{shalev2014understanding} provides an excellent comprehensive introduction to computational learning theory. Following are some basic definitions. %that are used in this thesis. 

\begin{definition}
	Let $\cX$ and $\cY$ be sets. $\cH$ is called {a class of hypotheses} if $\cH \subseteq \cY^\cX$.
%\end{definition}
%
%In the following definitions, assume that $\cH \subseteq \cY^\cX$ is a class of hypotheses. 
%
%\begin{definition}
	$\cS = \cX \times \cY$ is called the {sample space}.
%\end{definition}
%
%\begin{definition}
	A {realizable sample for $\cH$ of size $m$} is 
	\[
	S = \big((x_1,y_1), \dots, (x_m,y_m) \big) \in \cS^m
	\]
	such that there exists $h \in \cH$ satisfying $y_i=h(x_i)$ for all $i \in [m]$.
\end{definition}

\begin{definition}\label{def_learning_algo}
	A {learning algorithm for $\cH$ with sample size $m$} is a (possibly randomized) algorithm that takes a realizable sample $S = ((x_1,y_1), \dots, (x_m,y_m))$ for $\cH$ as input, and returns a function $h:\cX\rightarrow \cY$ as output. We say that the learning algorithm is {consistent} if the output $h$ always satisfies $y_i=h(x_i)$ for all $i \in [m]$. We say the algorithm is { proper } if it outputs members of $\cH$.
\end{definition}

%In this work, we will {only} consider \textit{consistent} and \textit{proper} learning algorithms.

\begin{definition}
	Let $p \in \Delta(\cS)$. We say that {$p$ is realizable by $\cH$} or equivalently that {$p$ is consistent with $\cH$} if there exists $h\in\cH$ such that for all $(x,y)\in \supp(p)$ it holds that $y=h(x)$.
	We use $\Delta_{\cH}$ to denote the set of all distributions in $\Delta(\cS)$ that are consistent with $\cH$.
\end{definition}

%\begin{definition}
%	Let $h: \cX \rightarrow \cY$ and $p\in\Delta_{\cH}$. The {loss of $h$ with respect to $p$}
%	%denoted $L_p(h)$, 
%	is %the probability that $h$ does not predict the correct label for an input sampled from $p$: 
%	\[
%	L_p(h) = \Pr_{(x,y)\sim p}\left(h(x) \neq y\right) .
%	\]  
%\end{definition}
%
%\begin{definition}[\citealt{valiant1984theory}]
%	We say that $\cH$ is {probably approximately correct (PAC) learnable} if there exists a function $m:(0,1)^2\rightarrow\mathbb{N}$ and a sequence of learning algorithms $(A_t)_{t=1}^\infty$ for $\cH$ where $A_t$ accepts samples of size $t$ such that for every $p\in\Delta_\cH$ and every $\varepsilon,\delta\in(0,1)$ it holds that if $m=m(\varepsilon,\delta)$ examples are sampled i.i.d.\ from $p$ and provided as input to $A_m$, then the hypothesis $h$ that $A$ outputs satisfies
%	\[
%	\Pr\left(L_p(h) \leq \varepsilon\right) \geq 1-\delta
%	\]
%%	In words, this means that $h$ is probably (with confidence $1-\delta$) approximately correct (has loss at most $\varepsilon$). 
%The point-wise minimal such function $m$ is called the {sample complexity} of $\cH$.
%\end{definition}

\begin{definition}
	We say that $\cH \subseteq \{0,1\}^\cX$ shatters some finite set $C\subseteq \cX$ if\:\footnote{$\cH_C$ is the restriction of $\cH$ to $C$.}
	%the restriction $\cH_C$ of $\cH$ to $C$ is the set of all functions from $\cX$ to $\cY$, that is 
	$\cH_C=\{0,1\}^C$.
	The {VC dimension of $\cH$} denoted $\VC(\cH)$ is the maximal size of a set $C \subseteq \cX$ such that $\cH$ shatters $C$. If $\cH$ can shatter sets of arbitrary size, we say that the VC dimension is $\infty$.
\end{definition}

%As cited above, $\cH$ is PAC learnable if and only if it has finite VC dimension, and furthermore, the VC dimension gives tight bounds on the sample complexity.

\subsubsection*{Information complexity}

\begin{definition}
	Let $A$ be a learning algorithm for $\cH$ with sample size $m$, 
	and let $h$ be
	the output of $A$ when executed with input $S$. We say that $A$ retains at most $d$ bits of information from $S$
	if
	\[
	\forall p \in \Delta_\cH \ \  \I{S \sim p^m}{S;h} \leq d .
	\]
\end{definition}

\begin{definition}\label{definition-IC}
Let $\cA_m$ denote the set of (possibly randomized) consistent and proper learning algorithms for $\cH$ over samples of size $m$. {The information complexity of $\cH$ for samples of size $m$
is 
$$\IC_m(\cH) = \inf_{A \in \cA_m} \: \sup_{p \in \Delta_\cH} \I{S \sim p^m}{A(S);S}.$$}
The information complexity of $\cH$ is
$\IC(\cH) = \sup_{m \in \mb{N}} \IC_m(\cH)$.
\end{definition}

Conceptually,  the information complexity is the minimal amount of information that an algorithm must retain %from its input 
in order to be consistent and proper.

\section{The Lower Bound for Thresholds}\label{lower-thresholds}

{We start with the case $d=1$, and with the class of thresholds.}

\begin{definition}
	The {class of threshold functions of size $2^n$} is denoted $\cT_n$ and defined as follows:
	Let $\cX = [2^n]$ and $\cY=\{0,1\}$. The set $\cT_n\subseteq \cY^\cX$ consists of all monotone increasing functions; that is, $\cT_n = \{f_k\}_{k \in [2^n+1]}$ where
	\[
	f_k(x) = 
	\left\{
	\begin{array}{ll}
	0  & x < k \\
	1 & x \geq k
	\end{array}
	\right.
	\]
\end{definition}

\nin Following is our main result for thresholds.

\begin{theorem}[\textbf{Lower bound for thresholds}]\label{lower-bound-for-thresholds} 
%The information complexity of $\cT_n$ is $\Omega(\log n)$.
For any (possibly randomized) proper and consistent learning algorithm $A$ for $\cT_n$ with sample size $m \geq 2$ there exists a distribution $p \in \Delta_{\cT_n}$ such that
	\[
	\I{S \sim p^m}{A(S);S} = \Omega(\log n) = \Omega(\log \log |\cX|) .
	\]
\end{theorem}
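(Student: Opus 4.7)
Fix an arbitrary consistent proper learning algorithm $A$ for $\cT_n$ with sample size $m \geq 2$. The plan is to construct an adversarial distribution $p \in \Delta_{\cT_n}$ for which $\I{S \sim p^m}{A(S); S} = \Omega(\log n)$, first treating deterministic algorithms and then reducing the randomized case.

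For deterministic $A$, one has $\I{S}{A(S); S} = \HH(A(S))$, so it suffices to produce $p$ with $\HH(A(S)) = \Omega(\log n)$. The key structural fact is that by consistency $A(S)=f_{k(S)}$ with $k(S)$ constrained to lie in the sample-determined \emph{consistent range} $R(S) = \bigl(\max\{x : (x,0) \in S\},\ \min\{x : (x,1) \in S\}\bigr]$. I would pick $p$ to be the uniform distribution on a set of $\Theta(n)$ labeled points, all realizable by a common threshold $k^*$, with spacings arranged (geometric/linear) so that as $S$ varies over $\supp(p^m)$ the pair $(x^0_{\max}(S), x^1_{\min}(S))$ takes $n^{\Omega(1)}$ distinct values with balanced probabilities. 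Since $A(S)$ is confined to $R(S)$, this funnels $\Omega(\log n)$ bits of entropy into $A(S)$ itself.

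The main difficulty is that the ``true'' threshold $k^*$ lies in \emph{every} consistent range, so a priori an algorithm might always output $f_{k^*}$ on the samples from $p$ and achieve $\HH(A(S))=0$. Ruling out this ``collapse'' requires that $p$ be chosen adversarially so that no single threshold stays inside $R(S)$ for a typical sample under $A$'s particular output function. A clean route---mirroring the approach later used in the paper's direct-sum result in Section~\ref{direct-sum}---is to invoke Sion's minimax theorem and lower bound $\inf_A \sup_p \I{}{A(S); S}$ by $\sup_\mu \inf_A \mathbb{E}_{p\sim\mu}[\I{}{A(S); S}]$; it then suffices to exhibit a single prior $\mu$ over $\Delta_{\cT_n}$ against which every consistent algorithm has expected mutual information $\Omega(\log n)$. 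This averaging step is the main technical obstacle, since one has to show that no algorithm's output can simultaneously ``track'' the true threshold $k^*$ across the whole family in the support of $\mu$.

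For randomized $A$, $\HH(A(S)\mid S)$ need not vanish, but since $A(S)\in R(S)$ we have $\HH(A(S)\mid S) \leq \mathbb{E}_S[\log|R(S)|]$. Arranging the distributions in $\mu$ so that $|R(S)|$ is typically $O(1)$ (e.g.\ by using tightly packed supports immediately below and above $k^*$) makes this correction term $O(1)$, and the deterministic lower bound on $\HH(A(S))$ then transfers to $\I{}{A(S); S} \geq \Omega(\log n) - O(1)$, yielding the theorem.
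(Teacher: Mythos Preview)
Your plan contains a real gap at the point you yourself flag as ``the main technical obstacle,'' and the proposed workarounds do not close it.

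First, the minimax detour does not help here. Sion's theorem gives
$\inf_A \sup_p \II{A(S);S} \geq \sup_\mu \inf_A \E_{p\sim\mu}\II{A(S);S}$,
but to lower bound the right-hand side you must exhibit a prior $\mu$ and then, for an \emph{arbitrary} algorithm $A$, show $\E_{p\sim\mu}\II{A(S);S}=\Omega(\log n)$. That is exactly the original difficulty: for each fixed $p$ in the support of $\mu$ the algorithm may output the true threshold $k^*(p)$ and pay zero, so you still have to prove that no algorithm can ``track'' $k^*(p)$ across the mixture. Minimax has moved the goalposts, not the ball. (In the paper, the minimax argument in Section~\ref{direct-sum} is applied only \emph{after} Theorem~\ref{lower-bound-for-thresholds} is already in hand; it converts the per-algorithm hard distribution into a universal hard prior, not the other way around.)

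Second, your randomized-case patch has an internal tension. If you force $|R(S)|=O(1)$ by packing the support of $p$ tightly around $k^*$, then every consistent output lies in that small window, so for that fixed $p$ you get $\HH(A(S))=O(1)$ as well, and $\II{A(S);S}$ collapses. You cannot simultaneously make $R(S)$ small (to kill $\HH(A(S)\mid S)$) and make the location of the output vary over $\Omega(n)$ disjoint regions (to make $\HH(A(S))$ large) under a \emph{single} realizable distribution $p$.

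The paper takes a completely different route. It first symmetrizes $A$ under permutations of $S$ (Lemma~\ref{permutation-decreases-information}); this is what removes the $1/m^2$ loss of~\cite{bassily2018learners} and lets one reason about a canonical sample of the form $\big((1,0),\dots,(1,0),(i,0),(j,1)\big)$. It then records, for every pair $(i,j)$, the output \emph{distribution} of the symmetrized algorithm in a $2^n\times 2^n$ matrix and applies a combinatorial lemma (Lemma~\ref{non_deteministic_induction_lemma}) to find a single row $r$ containing $\Omega(n)$ cells whose output distributions place mass $\geq 1/2$ on pairwise disjoint sets. The hard distribution $p$ is then built from that row: it makes the event ``$S$ is a permutation of $S_{r,k_i}$ for a uniformly random $i$'' occur with constant probability, and on that event Lemma~\ref{lemma_probabilities_extended} yields $\II{S;A(S)}=\Omega(\log n)$. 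The separation of output distributions is exactly what rules out the ``always output $k^*$'' collapse, and it is obtained combinatorially from the algorithm itself rather than via minimax or an entropy-of-$R(S)$ argument.
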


This theorem (which is proved in Appendix~\ref{sec:LBforT}) is an improvement upon Theorem 5.1 from~\cite{bassily2018learners}.
The proof of Theorem~\ref{lower-bound-for-thresholds} follows the same outline
as the proof of~\cite{bassily2018learners}.
Their proof, however, was based upon conditioning on an event of small probability $O(\frac{1}{m^2})$, so they arrived at a quantitatively weaker bound of $\Omega(\frac{\log n}{m^2})$. 

We are able to remove this dependence by using the following simple, yet useful {and general} observation: Permuting the ordered set of samples does not increase the mutual information.
{In other words, a learning algorithm that aims at minimizing
the information cost should not use the order in which the examples appeared.}

\begin{lemma}\label{permutation-decreases-information}
Let $\cH$ be a class of hypotheses and $A$ be a learning algorithm that accepts samples of size $m$. Define $A'$ to be $A'(S)=A(\Sigma(S))$ where $\Sigma$ is a random permutation chosen uniformly from all permutations on $m$ elements (independently of the input sample and the random coins of $A$). Then	for all $p\in\Delta_\cH$,
	\[
	\I{S \sim p^m}{A(S);S} \geq \I{S \sim p^m}{A'(S);S} .
	\]
\end{lemma}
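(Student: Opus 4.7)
The plan is to exploit two facts: (i) the input distribution $p^m$ is exchangeable, so permuting the coordinates of $S$ preserves its distribution, and (ii) mutual information is invariant under bijections applied to one of the variables. The permutation $\Sigma$ is chosen independently of $S$ and of $A$'s internal coins, which lets us condition on it cleanly.

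First I would open up $I(S; A'(S))$ by adjoining $\Sigma$ to the output:
\[
\I{S \sim p^m}{A'(S);S} \leq \I{S \sim p^m}{A(\Sigma(S)),\Sigma;S} = \I{S \sim p^m}{\Sigma;S} + \I{S \sim p^m}{A(\Sigma(S));S \mid \Sigma}.
\]
Since $\Sigma$ is independent of $S$ by construction, the first term vanishes, so it suffices to bound the conditional mutual information. Writing the conditional information as an average over $\sigma$ drawn from the uniform distribution on permutations,
\[
\I{S \sim p^m}{A(\Sigma(S));S \mid \Sigma} = \EE{\sigma}\bigl[\I{S \sim p^m}{A(\sigma(S));S}\bigr],
\]
it remains to prove that each summand equals $\I{S \sim p^m}{A(S);S}$.

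For a fixed permutation $\sigma$, the map $s \mapsto \sigma(s)$ is a bijection on $\cX^m$, and it is independent of the internal coins of $A$. Setting $T = \sigma(S)$, the pair $(S, A(\sigma(S)))$ is in measurable bijective correspondence with $(T, A(T))$ via the coordinate permutation, hence $\I{}{S; A(\sigma(S))} = \I{}{T; A(T)}$. Moreover, because $p^m$ is invariant under coordinate permutations, $T = \sigma(S) \sim p^m$, so the joint law of $(T, A(T))$ coincides with the joint law of $(S, A(S))$ under $S \sim p^m$, and in particular $\I{}{T;A(T)} = \I{S \sim p^m}{A(S);S}$. Averaging over $\sigma$ and combining with the previous display yields the lemma.

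The only subtlety I foresee is keeping the three independent sources of randomness straight — the sample $S$, the permutation $\Sigma$, and the internal coins of $A$ — so that the identity $A'(S) = A(\Sigma(S))$ really is a function of $\Sigma(S)$ and $A$'s coins alone, which is what legitimizes the bijection step. Once this is set up carefully, the argument reduces to exchangeability of $p^m$ and the standard invariance of mutual information under bijections; no further obstacle is expected.
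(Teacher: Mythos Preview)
Your argument is correct. Both proofs rest on the same two ingredients---exchangeability of $p^m$ and independence of $\Sigma$ from $(S,\text{coins of }A)$---but they organize them differently. The paper works at the level of entropies: it shows $\HH(A(S)\mid S)=\HH(A(\Sigma(S))\mid \Sigma(S))=\HH(A(\Sigma(S))\mid \Sigma(S),S,\Sigma)\le \HH(A(\Sigma(S))\mid S)$ and combines this with $\HH(A(S))=\HH(A(\Sigma(S)))$. You instead adjoin $\Sigma$ to the output, use the chain rule and $\Sigma\perp S$ to reduce to $\II{A(\Sigma(S));S\mid\Sigma}$, and then freeze $\Sigma=\sigma$ and invoke bijection-invariance of mutual information together with $\sigma(S)\sim p^m$ to identify each summand with $\II{A(S);S}$. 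The paper's route is a line shorter; yours makes the role of $\Sigma$'s independence and the per-permutation equality more explicit, and avoids ever splitting into entropy terms. Either way the content is the same, and your handling of the ``three independent sources of randomness'' is fine.
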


\begin{proof}[Lemma~\ref{permutation-decreases-information}].
	%The consistency of $A'$ is clear. To see that the information decreases, notice that
	\begin{align*}
	\HH(A(S) \: | \: S) &\stackrel{(a)}{=} \HH(A(\Sigma(S))\: | \:\Sigma(S)) 
	 \stackrel{(b)}{=} \HH(A(\Sigma(S))\: | \:\Sigma(S), S, \Sigma) 
	 \leq \HH(A(\Sigma(S))\: | \: S)
	\end{align*}
	where $(a)$ holds because $S$ and $\Sigma(S)$ have the same distribution, and $(b)$ holds because $A(\Sigma(S))$ and $(S, \Sigma)$ are independent conditioned on $\Sigma(S)$.
	Therefore,
	\begin{align*}
	\I{S \sim p^m}{A(S);S} & = \HH(A(S))-\HH(A(S) \: | \: S) \\
	 %= \HH(A(\Sigma(S))) - \HH(A(S)\:|\:S) 
	 &\geq \HH(A(\Sigma(S))) - \HH(A(\Sigma(S))\: | \: S) \\
	& = \I{S \sim p^m}{A(\Sigma(S));S}
	\end{align*}
	as desired.
\end{proof}

\section{The Direct Sum}\label{direct-sum}

Our strategy for proving the lower bound in Theorem~\ref{general-lower-bound} is to use a reduction to the lower bound for thresholds. 
The main step in the proof is a direct sum result for information complexity
(as was discussed in the introduction).
We know that the information complexity of thresholds $\cT_n$ is high.
Our goal is to prove that this implies that the information complexity
of the class of functions that have $d$ thresholds is $d$ times as large
as that of $\cT_n$.

The statement we prove can be more generally stated using the following
terminology.

\begin{definition}
	Let 
	\[
	\cH_1 \subseteq \cY_1^{\cX_1},\dots,\cH_d \subseteq \cY_d^{\cX_d}
	\]
	be classes of hypotheses, where $\cX_1,\dots,\cX_d$ are disjoint sets. The $d$-fold product of $\cH_1,
		\dots,\cH_d$ is defined by:
	\[
	\cX = \bigcup_{i=1}^d \cX_i, \:\:\:\:\: \cY = \bigcup_{i=1}^d \cY_i ,
	\]
and
	\[
	\cH = \cH_1\times\cH_2\times\cdots\times\cH_d = \left\{(h_1,\dots,h_d):\:\forall i \ \ h_i\in\cH_i\right\} \subseteq \cY^{\cX}
	\]
	such that if $h=(h_1,\dots,h_d)$ and $x\in\cX$ then 
	$
	h(x)=h_i(x)
	$
	where $i$ is the integer for which $x \in \cX_i$.
\end{definition}

%\nin The following class contains the $d$-fold product of thresholds:

\nin {We are interested in the behavior of the VC dimension
with respect to the product of classes.}

\begin{lemma}\label{clm:VCdimofX}
$\VC(\cH_1\times\cH_2\times\cdots\times\cH_d) = \sum_{j=1}^d \VC(\cH_i)$.
\end{lemma}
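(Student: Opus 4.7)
The plan is to prove the equality by two matching inequalities, both of which follow directly from the definition of shattering together with the product structure of $\cH$.

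For the lower bound $\VC(\cH) \geq \sum_i \VC(\cH_i)$, I would pick for each $i$ a set $C_i \subseteq \cX_i$ of size $\VC(\cH_i)$ that is shattered by $\cH_i$, and argue that $C = \bigcup_i C_i$ is shattered by $\cH$. Given any labeling $\sigma : C \to \cY$, its restriction $\sigma_i = \sigma|_{C_i}$ is realized by some $h_i \in \cH_i$ (since $C_i$ is shattered). The tuple $h = (h_1,\dots,h_d) \in \cH$ then agrees with $\sigma$ on all of $C$, because on $C_i \subseteq \cX_i$ the function $h$ acts as $h_i$. Since the $\cX_i$ are disjoint we have $|C| = \sum_i |C_i| = \sum_i \VC(\cH_i)$.

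For the upper bound $\VC(\cH) \leq \sum_i \VC(\cH_i)$, I would take any set $C \subseteq \cX$ shattered by $\cH$ and set $C_i = C \cap \cX_i$, so $|C| = \sum_i |C_i|$. It suffices to show each $\cH_i$ shatters $C_i$. Given any labeling $\tau_i : C_i \to \cY_i$, extend it (arbitrarily, using any fixed labels realizable by $\cH_j$ on $C_j$ for $j \neq i$) to a labeling $\tau$ on all of $C$; by shattering there is $h = (h_1,\dots,h_d) \in \cH$ agreeing with $\tau$ on $C$, and then $h_i$ agrees with $\tau_i$ on $C_i$. Hence $|C_i| \leq \VC(\cH_i)$ for each $i$, which yields the bound.

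There is no real obstacle here: the argument is essentially bookkeeping enabled by the disjointness of the domains $\cX_i$, which makes the product structure split cleanly over any subset of $\cX$. The only mild subtlety in the upper-bound direction is making sure the partial labeling $\tau_i$ on $C_i$ can always be extended to a labeling of $C$ realizable by $\cH$; this is handled by noting that each $\cH_j$ is nonempty (otherwise $\VC(\cH_j)$ is vacuous and $\cH$ itself is empty, in which case the identity holds trivially), so arbitrary completions are available on the other blocks $C_j$.
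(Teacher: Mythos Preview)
Your proposal is correct and follows essentially the same approach as the paper: both directions hinge on the observation that, because the domains $\cX_i$ are disjoint, $\cH$ shatters $C$ if and only if each $\cH_i$ shatters $C\cap\cX_i$. The only cosmetic difference is that the paper phrases the upper bound via contradiction and pigeonhole (if $|C|>\sum_i \VC(\cH_i)$ then some $|C\cap\cX_j|>\VC(\cH_j)$), whereas you argue directly that every $C_i$ is shattered; note also that your care about extending $\tau_i$ realizably is unnecessary, since shattering of $C$ by $\cH$ guarantees \emph{every} labeling of $C$ is realized.
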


%Intuitively, each function in $\cT_{n,d}$ is a concatenation of $d$ threshold functions. Therefore, 
%We want to show that an algorithm that outputs $d$ thresholds must use $d$ times as much information as an algorithm that outputs one threshold. We call this property ``direct sum," due to the similarity with the direct sum notions from communication complexity, computational complexity, and algebra (see e.g.\ \cite{kushilevitz1997communication}, section 4.1).

%The above discussion suggest the following definition for direct sum  of learning problems.

%\section{Proof of Direct Sum Information Complexity} 

We now state the direct sum result for information complexity.
Namely, that the information complexity of the product class roughly equals the sum of the information complexities. We do this using the two theorems below (all further proofs are deferred to the appendices).

The first theorem enable us to treat the mutual information on each summand separately. 
%The mutual information of the product class is at least the sum of the mutual informations of the individual classes (given that the amount of samples received in each summand is known).

\begin{theorem}[\textbf{Direct sum I}]\label{superadditivity}
	Assume that 
	\begin{itemize}
		\item{$\cH$ is the product of $\cH_1,\dots,\cH_d$.}
		\item{$A$ is a (possibly randomized) learning algorithm for $\cH$.}
		\item{$p\in\Delta_\cH$ and $S \sim p^m$.}
		\item{$A(S)=(h_1,\dots,h_d)$.}
		\item{For each $i \in [d]$,
			\begin{itemize}
				\item{$S_i$ is the subsample of $S$ of examples from $\cX_i\times\cY_i$.}
				\item{$M_i = |S_i|$ is the number of examples in $S_i$.}
			\end{itemize}
		} 
	\end{itemize}
	Then
	\[
	\I{S \sim p^{m}}{A(S);S} \geq \sum_{i=1}^d \I{S \sim p^{m}}{h_i;S_i|M_i}
	\]
\end{theorem}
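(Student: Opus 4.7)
\textbf{Plan for Theorem~\ref{superadditivity}.} The idea is to first condition on the full vector of counts $M=(M_1,\ldots,M_d)$ so that the subsamples $S_1,\ldots,S_d$ become mutually independent, then apply standard super-additivity of mutual information across these independent pieces, and finally remove the extraneous conditioning on $M_{-i}:=(M_j)_{j\neq i}$ to recover the statement in the form $\I{}{h_i;S_i\mid M_i}$.

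Concretely, I would proceed in five short steps. First, since $M$ is a deterministic function of $S$, the chain rule gives
\[
\I{S\sim p^m}{A(S);S} = \I{}{A(S);M} + \I{}{A(S);S\mid M} \geq \I{}{A(S);S\mid M}.
\]
Second, since $(S_1,\ldots,S_d)$ is itself a function of $S$, we have
\[
\I{}{A(S);S\mid M} \geq \I{}{A(S);S_1,\ldots,S_d\mid M}.
\]
Third, conditional on $M$ the subsamples are mutually independent: each $S_i$ is distributed as $M_i$ i.i.d.\ draws from the conditional law of $p$ on $\cX_i\times\cY_i$, and different categories are independent of each other. The chain rule for mutual information combined with the identity $\HH(S_i\mid M,S_1,\ldots,S_{i-1})=\HH(S_i\mid M)$ (which is exactly the conditional-independence statement) yields the conditional super-additivity
\[
\I{}{A(S);S_1,\ldots,S_d\mid M} \geq \sum_{i=1}^d \I{}{A(S);S_i\mid M} \geq \sum_{i=1}^d \I{}{h_i;S_i\mid M},
\]
where the last step is data processing applied coordinate-wise, since $h_i$ is a deterministic projection of $A(S)$.

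The final step replaces conditioning on the full count vector $M$ with conditioning only on $M_i$. Here I use that $S_i$ is independent of $M_{-i}$ given $M_i$, so that $\HH(S_i\mid M)=\HH(S_i\mid M_i)$, while $\HH(S_i\mid h_i,M)\leq \HH(S_i\mid h_i,M_i)$ because additional conditioning cannot increase entropy. Subtracting the two gives $\I{}{h_i;S_i\mid M}\geq \I{}{h_i;S_i\mid M_i}$, and chaining with the previous display delivers the desired bound. The main obstacle I anticipate is conceptual rather than computational: one must resist the temptation to condition on each $M_i$ separately (which breaks the independence needed for super-additivity) and instead first condition on the whole vector $M$, then argue by a small symmetry lemma that this extra conditioning only makes the lower bound stronger, not weaker.
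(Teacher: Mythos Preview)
Your proposal is correct and follows essentially the same route as the paper: condition on the full count vector $M$, use the chain rule plus conditional independence of the $S_i$'s given $M$ to obtain $\sum_i \I{}{h_i;S_i\mid M}$, and then use $S_i\perp M_{-i}\mid M_i$ to drop the conditioning on $M_{-i}$. The only cosmetic difference is the order of two steps---the paper first projects $A(S)$ to $h_i$ and then removes the conditioning on $S_{<i}$, whereas you remove $S_{<i}$ first and project afterward; both orderings are valid.
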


{Here is a simple illustration of the usefulness of the theorem.
Consider a learning algorithm $A$ for the product class $\cH$.
It gets as input a list of examples $S$,
and it knows to which $\cH_i$ each example corresponds to.
When applying the chain rule on $\II{A(S);S}$,
we see e.g.\ that when analyzing the information relevant to $\cH_2$
we need to condition on examples for $\cH_1$.
This in turn implies that we are not dealing with a single algorithm for $\cH_2$,
but with a family of algorithms that implicitly depend on $S$.
The theorem, however, shows that all of this can be ignored
without a significant price.
The only thing that matters is how many examples 
that correspond to $\cH_2$ there are.
}

{To complete the proof,} we now need to find a hard distribution for the product class, one that yields large information complexity. A natural idea would be to use the hard distributions of the individual classes. However, this approach fails, since there is no single hard distribution; it follows from \cite{bassily2018learners} for the case of thresholds that for every algorithm there is a hard distribution but for every distribution there is also an algorithm that retains little information {($O(1)$)}.

To solve this problem, we need to replace the notion of a hard distribution,
by the notion of a hard distribution on distributions.
Indeed, we show that for thresholds (and in fact more generally)
there is a single distribution on distributions that
for every algorithm yields high information cost (on average).
To this end, we use the spirit of von Neumann minimax theorem (Theorem~\ref{von-neumann-minimax}) and write the following (we actually need Sion's minimax theorem, Theorem~\ref{sion-minimax}):
\[
\min_{A}\max_{D\in\Delta(\Delta_{\cH})} f(A,D) = \max_{D\in\Delta(\Delta_{\cH})}\min_{A} f(A,D)
\]
where 
$
f(A,D) = \EE{p \sim D} \: \I{S \sim p^m}{A_m(S);S}
$.
This enables us to find a distribution over the space of distributions that is hard for all algorithms. 
This rational can be useful in other contexts where the minimax theorem doesn't hold;
although the minimax theorem does not apply for distributions versus algorithms in this context,
it does hold for distributions over distributions versus algorithms.

{
We still have one more technical difficulty to handle: Each hard distribution is tailored for a specific $m$, and we do not know $m$ in advance. To address this issue, we need to consider the setting where the number of samples $m$ is also random, as in the following lemma. 
}
\begin{definition}
	Let $M \subseteq \mb{N}$ and let $\cH \subseteq \cY^\cX$ be a class of hypotheses. A learning algorithm for $\cH$ that accepts samples of sizes in $M$ is a vector
$\left(A_m\right)_{m\in M}$ such that each $A_m$ is a learning algorithm for $\cH$ that accepts samples of size $m$. We say that $\left(A_m\right)_{m\in M}$ is consistent and proper if all the algorithms in the vector are consistent and proper.
\end{definition}

The combination of the minimax idea together with a randomized sample size is summarized in the following lemma.

\begin{lemma}\label{lemma-global-distribution-extended}
	Let $c\in\mathbb{R}$, $M \subseteq \mb{N}$, $|M|<\infty$, $\mu\in\Delta(M)$ and let $\cH \subseteq \cY^\cX$ be a class of hypotheses. Assume that for any consistent and proper learning algorithm $\left(A_m\right)_{m\in M}$ for $\cH$ there exists $p \in \Delta_{\cH}$ such that 
	\[
	\EE{m \sim \mu} \: \I{S \sim p^m}{A_m(S);S} \geq c
	\]
		Then there exists $D \in \Delta(\Delta_{\cH})$ such that for any consistent and proper learning algorithm $\left(A_m\right)_{m\in M}$ it holds that
	\[
	\EE{p \sim D} \: \EE{m \sim \mu} \: \I{S \sim p^m}{A_m(S);S} \geq c
	\]
\end{lemma}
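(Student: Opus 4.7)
The plan is to view the assertion as a minimax equality. I set up a two-player zero-sum game in which the algorithm player selects a consistent and proper learning algorithm $A = (A_m)_{m \in M}$ for $\cH$, the distribution player selects a mixed strategy $D \in \Delta(\Delta_\cH)$, and the payoff is
\[
f(A, D) \;=\; \EE{p \sim D}\,\EE{m \sim \mu}\, \I{S \sim p^m}{A_m(S); S}.
\]
Two structural facts drive the argument. First, $f$ is linear (hence concave and continuous) in $D$. Second, $f$ is convex in $A$: mutual information is a convex function of the conditional kernel when the input marginal is held fixed, and any convex combination of consistent and proper algorithms for $\cH$ is itself a consistent and proper algorithm for $\cH$.

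For each fixed $A$, linearity in $D$ means the supremum of $D \mapsto f(A, D)$ is attained at Dirac masses, so $\sup_{D} f(A, D) = \sup_{p \in \Delta_\cH} \EE{m \sim \mu}\, \I{S \sim p^m}{A_m(S); S}$. The hypothesis of the lemma therefore becomes
\[
\inf_{A}\, \sup_{D \in \Delta(\Delta_\cH)}\, f(A, D) \;\geq\; c.
\]
Applying Sion's minimax theorem (Theorem~\ref{sion-minimax}) to swap the inf and the sup yields
\[
\sup_{D \in \Delta(\Delta_\cH)}\, \inf_{A}\, f(A, D) \;=\; \inf_{A}\, \sup_{D \in \Delta(\Delta_\cH)}\, f(A, D) \;\geq\; c.
\]
The function $D \mapsto \inf_{A} f(A, D)$ is an infimum of continuous linear functionals, and so it is concave and upper semicontinuous on the convex set of distributions over $\Delta_\cH$; on a suitable compact convex subset the outer supremum is realised by some $D^{*}$, which is precisely the distribution the lemma asserts.

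The step requiring the most care is the application of Sion's theorem, together with the attainment of the outer supremum. One has to endow the space of randomised consistent and proper algorithms with a topology under which $A \mapsto f(A, D)$ is lower semicontinuous for each $D$, and work with a compact convex subset of $\Delta(\Delta_\cH)$ to guarantee that the maximum over $D$ is attained rather than merely approached. This is transparent when $\cX$ and $\cH$ are finite, in which case algorithms reduce to stochastic matrices on a finite sample space and $\Delta_\cH$ sits inside a finite-dimensional simplex where both continuity and compactness are immediate; the general case is handled by a routine compactification/approximation argument on the distribution side. Once these topological pieces are in place, the minimax swap and the attainment give the claim in one line.
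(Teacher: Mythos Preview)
Your proposal is correct and follows essentially the same route as the paper's proof: both cast the lemma as a minimax equality for the payoff $f(A,D)=\EE{p\sim D}\,\EE{m\sim\mu}\,\II{A_m(S);S}$, observe that $f$ is convex in $A$ (via the convexity of mutual information in the channel, Lemma~\ref{cover06-concave}) and linear in $D$, and invoke Sion's minimax theorem (Theorem~\ref{sion-minimax}) to swap the $\inf$ and $\sup$. The only substantive difference is in the verification of Sion's hypotheses: the paper works out in detail that the algorithm space $\cA$ is a compact convex subset of a finite-dimensional Euclidean space (implicitly using $|\cH|<\infty$ and that there are only finitely many realizable samples of sizes in $M$) and that $f$ is jointly continuous in the $\ell_1$ metric, whereas you assert these are ``transparent'' in the finite case and defer the general case to a sketched compactification argument.
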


%{In fact, only a single distribution on the sample size $m$ is important
%for us; }
%, i.e.,
%	\[
%	\mr{Bin}(k,p)(t) = \binom{k}{t} p^t(1-p)^{k-t}
%	\]
%\end{notation}}

{The second direct sum theorem assumes that the consequent of Lemma~\ref{lemma-global-distribution-extended} holds, i.e.\ that for a specific distribution on $m$ (the number of examples received), each class has a hard distribution on distributions such that the expected mutual information over all those distributions is high for all algorithms. }

%Theorem~\ref{general-lower-bound} is a direct application of these direct sum theorems (as we explain in the next section).}

\begin{theorem}[\textbf{Direct sum II}]\label{lemma_additivity}
	Let $m,d \in \mb{N}$ and $T= [\frac{m}{2},\frac{3m}{2}] \cap \mathbb{N}$. {Let $\mu\in\Delta(T)$ be the distribution\footnote{{$\mr{Bin}(k,p)$ is the binomial distribution with $k$ i.i.d.\ trials each of which has probability of success $p$.}} $\mr{Bin}(dm,\frac{1}{d})$ conditioned on the event
	$E$ that the integer sampled is in $T$.} Assume that for each $i\in[d]$:
	\begin{enumerate}
		\item{$\cH_i \subseteq \cY_i^{\cX_i}$ is a class of hypotheses.}
		\item{\label{assumption-cost-for-class}
			There exists $c_i\in\mb{R}$ and $D_i\in\Delta(\Delta_{\cH_i})$ such that for every (possibly randomized) consistent and proper learning algorithm $\left(A_t\right)_{t\in T}$ for $\cH_i$, it holds that
			\[
			\EE{p \sim D_i} \: \EE{t \sim \mu} \: \I{S \sim p^t}{A_t(S);S} \geq c_i
			\]}
	\end{enumerate}
	Finally, let $\cH$ be the product of $\cH_1,\dots,\cH_d$. Then for every (possibly randomized) consistent and proper learning algorithm $A$ for $\cH$ that accepts samples of size $dm$ there exists a distribution $p\in\Delta_\cH$ such that
	\[
	\I{S \sim p^{dm}}{A(S);S} \geq \alpha \sum_{i=1}^d c_i
	\]
	where $\alpha = 1-2e^{-\frac{m}{2d}}$.
\end{theorem}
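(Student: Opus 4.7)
The plan is to build a random distribution $p \in \Delta_\cH$ via mixing, lower bound its expected mutual information by $\alpha \sum_i c_i$, and then deduce that some realization attains this bound. Concretely, sample $p_1,\ldots,p_d$ independently with $p_i \sim D_i$ and set $p = \frac{1}{d}\sum_{i=1}^d p_i$, where each $p_i$ is viewed as a distribution on $\cS$ supported on $\cX_i \times \cY_i$. Then $p \in \Delta_\cH$, because combining any $h_i \in \cH_i$ consistent with $p_i$ yields a hypothesis in $\cH$ consistent with $p$. Under $p$, the $dm$ i.i.d.\ samples land in component $i$ independently with probability $1/d$, so $M_i \sim \mr{Bin}(dm, 1/d)$; by the multinomial structure, conditioned on $M_i = t$ the subsample $S_i$ is distributed as $p_i^t$ and is independent of $(M_{-i}, S_{-i})$.

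Direct Sum I (Theorem~\ref{superadditivity}) gives $\I{S\sim p^{dm}}{A(S); S} \geq \sum_{i=1}^d \I{S\sim p^{dm}}{h_i; S_i | M_i}$ for every fixed $p$, so it suffices to show $\EE{p} \I{S\sim p^{dm}}{h_i; S_i | M_i} \geq \alpha c_i$ for each $i$. I fix $i$ and a realization of $p_{-i} := (p_j)_{j\neq i}$, and define an algorithm family $(A_t^{(i), p_{-i}})_{t\in T}$ for $\cH_i$: given $S_i$ of size $t$, sample $(M_{-i}, S_{-i})$ from the conditional law given $M_i = t$ (which depends only on $p_{-i}$), run $A$ on the combined sample $S$, and output the $i$-th component of $A(S)$. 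Since $A$ is proper and consistent on $\cH$, each $A_t^{(i), p_{-i}}$ is proper and consistent on $\cH_i$, and so hypothesis (\ref{assumption-cost-for-class}) applied to this family yields
\[
\EE{p_i \sim D_i}\EE{t\sim \mu}\I{S_i\sim p_i^t}{A_t^{(i),p_{-i}}(S_i); S_i} \geq c_i.
\]

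For any fixed $p = (p_i, p_{-i})$ and any $t\in T$, the conditional independence above gives $\I{S\sim p^{dm}}{h_i; S_i | M_i = t} = \I{S_i\sim p_i^t}{A_t^{(i),p_{-i}}(S_i); S_i}$. Keeping only terms with $t \in T$ in the expansion of $\I{S\sim p^{dm}}{h_i; S_i | M_i}$ and using $\Pr(M_i = t) = \Pr(M_i\in T)\cdot \mu(t)$ for $t\in T$,
\[
\I{S\sim p^{dm}}{h_i; S_i | M_i} \geq \alpha \cdot \EE{t\sim\mu} \I{S_i\sim p_i^t}{A_t^{(i),p_{-i}}(S_i); S_i},
\]
with $\alpha = \Pr(M_i\in T) \geq 1 - 2e^{-m/(2d)}$ by a Hoeffding bound on $\mr{Bin}(dm,1/d)$. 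Averaging over $p_i \sim D_i$ and then $p_{-i} \sim \bigotimes_{j\neq i}D_j$ (swapping by Fubini, since mutual information is nonnegative) and invoking the previous display yields $\EE{p} \I{S\sim p^{dm}}{h_i; S_i | M_i} \geq \alpha c_i$; summing over $i$ and combining with Direct Sum I gives $\EE{p} \I{S\sim p^{dm}}{A(S); S} \geq \alpha \sum_i c_i$, so some realization $p^\star$ achieves this bound. The main subtlety I expect is that $A_t^{(i), p_{-i}}$ depends on $p_{-i}$ since it must simulate the missing subsamples, so hypothesis (\ref{assumption-cost-for-class}) has to be invoked for each fixed $p_{-i}$ \emph{before} averaging over it; this is why the argument takes place in the space of distributions over distributions rather than in $\Delta_\cH$ directly.
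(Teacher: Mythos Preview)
Your proposal is correct and follows essentially the same route as the paper: build the random mixture $p=\tfrac{1}{d}\sum_i p_i$ with $p_i\sim D_i$, apply Direct Sum~I, then for each fixed $p_{-i}$ regard the $i$-th coordinate of $A$ (with the remaining subsamples and their interleaving treated as internal randomness) as a proper consistent learner for $\cH_i$, invoke assumption~(\ref{assumption-cost-for-class}), and combine with $\mu=(M_i\mid M_i\in T)$ and the Hoeffding bound $\Pr(M_i\in T)\geq\alpha$. Two cosmetic points: you should write $\Pr(M_i\in T)\geq\alpha$ rather than ``$\alpha=\Pr(M_i\in T)$'', and when defining $A_t^{(i),p_{-i}}$ you should make explicit that reconstructing a full sample $S$ of length $dm$ also requires sampling the positions of the $S_i$-entries (not just $(M_{-i},S_{-i})$); with those clarifications your argument coincides with the paper's.
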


\section{The Lower Bound for VC Classes}\label{VC bound}

To prove Theorem~\ref{general-lower-bound} we need to verify that condition 2 of Theorem~\ref{lemma_additivity} is true for {the product of $d$ thresholds.}

\begin{definition}
	Let $\cT_{n,d}$ be {the $d$-fold product of $\cT_n$.
	The domain of the functions in $\cT_{n,d}$ is $[d \cdot 2^n]$.
	Each function in $\cT_{n,d}$ is of the form:
%	the following class of functions: $\cX = [d\cdot2^n]$, $\cY=\{0,1\}$ and
%	\[
%	\cT_{n,d} = \left\{f_{k_1,\dots k_d}:\: \forall j\in[d]\:\: k_j \in [(j-1)\cdot2^n+1, j\cdot2^n+1] \right\} \subseteq \cY^\cX
%	\]
%	where
	$$
	f_{k_1,\dots k_d}(x) = 
		\begin{cases}
		0 & x < k_{i(x)} \\
		1 & x \geq k_{i(x)}
		\end{cases} ~~~~~~~~	i(x) = \left\lceil \frac{x}{2^d} \right\rceil 
	$$
	where $k_j \in [(j-1)\cdot2^n+1, j\cdot2^n+1]$ for each $j\in [d]$.
	(See Figure~\ref{figure-Tnd} for a graphical illustration of this definition.)}
\end{definition}

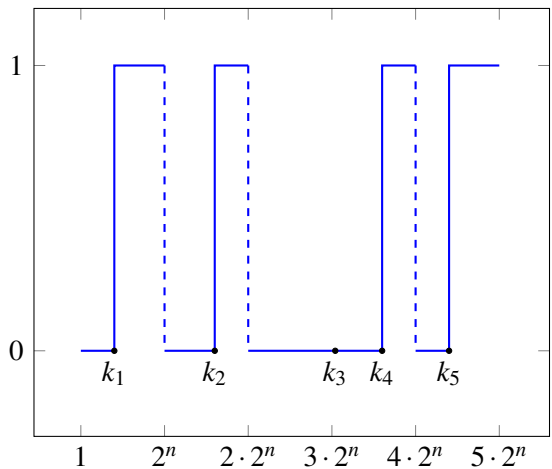
\begin{figure}[h]
	\centering
	\begin{tikzpicture}
	\begin{axis}[%
	%,xlabel=$e$
	%,ylabel=$\cY$
	%,axis x line = bottom
	%,axis y line = left
	,ytick={0,1}
	,xtick={0,5,10,15,20,25}
	,xticklabels={$1$,$2^n$,$2\cdot2^n$,$3\cdot2^n$,$4\cdot2^n$,$5\cdot2^n$}
	,ymax=1.2 % or enlarge y limits=upper
	,xmax=28
	,ymin=-0.3
	]
	\addplot+[const plot, no marks, thick, blue] coordinates {(0, 0) (2, 0) (2, 1) (5,1)} 		node[above,pos=.57,black] {};
	\addplot+[const plot, no marks, thick, blue] coordinates {(5, 0) (8, 0) (8, 1) (10,1)} 		node[above,pos=.57,black] {};
	\addplot+[const plot, no marks, thick, blue] coordinates {(10, 0) (15, 0)} 					node[above,pos=.57,black] {};
	\addplot+[const plot, no marks, thick, blue] coordinates {(15, 0) (18, 0) (18, 1) (20,1)} 	node[above,pos=.57,black] {};
	\addplot+[const plot, no marks, thick, blue] coordinates {(20, 0) (22, 0) (22,1) (25,1)} 	node[above,pos=.57,black] {};
	
	\addplot+[dashed, no marks, thick, blue] coordinates {(5, 1) (5, 0)} 						node[above,pos=.57,black] {};
	\addplot+[dashed, no marks, thick, blue] coordinates {(10, 1) (10, 0)} 						node[above,pos=.57,black] {};
	\addplot+[dashed, no marks, thick, blue] coordinates {(20, 1) (20, 0)} 						node[above,pos=.57,black] {};	
	
	\filldraw (2,0) circle (1pt) node[align=left,   below] {$k_1$};
	\filldraw (8,0) circle (1pt) node[align=left,   below] {$k_2$};
	\filldraw (15.2,0) circle (1pt) node[align=left,   below] {$k_3$};
	\filldraw (18,0) circle (1pt) node[align=left,   below] {$k_4$};
	\filldraw (22,0) circle (1pt) node[align=left,   below] {$k_5$};
	\end{axis}
	\end{tikzpicture}
	\caption{\label{figure-Tnd}Example of a function in $\cT_{n,5}$.}
\end{figure}

\nin Lemma~\ref{clm:VCdimofX} implies: 

\begin{corollary}\label{vc-dim-of-t_nd}
	The VC dimension of $\cT_{n,d}$ is $d$.
\end{corollary}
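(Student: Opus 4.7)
The plan is simple: apply Lemma~\ref{clm:VCdimofX} and reduce everything to computing the VC dimension of a single threshold class. Since $\cT_{n,d}$ is defined as the $d$-fold product $\cT_n \times \cdots \times \cT_n$, with the disjoint blocks $\cX_j = [(j-1)\cdot 2^n + 1,\, j\cdot 2^n]$ for $j \in [d]$ each carrying an independent copy of $\cT_n$, the hypothesis of Lemma~\ref{clm:VCdimofX} is satisfied on the nose. The lemma therefore yields $\VC(\cT_{n,d}) = d \cdot \VC(\cT_n)$, so the whole task collapses to showing $\VC(\cT_n) = 1$.

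For the lower bound on $\VC(\cT_n)$ I would exhibit a shattered singleton: the point $\{1\} \subseteq [2^n]$ is shattered since $f_1(1) = 1$ and $f_2(1) = 0$, giving both labelings. For the upper bound, suppose towards a contradiction that some two-element set $\{a,b\}$ with $a < b$ were shattered; then some $f_k \in \cT_n$ would satisfy $f_k(a) = 1$ and $f_k(b) = 0$, contradicting the fact that every $f_k$ is monotone non-decreasing. Hence $\VC(\cT_n) = 1$, and combining with the displayed identity from Lemma~\ref{clm:VCdimofX} gives $\VC(\cT_{n,d}) = d$. There is no real obstacle in this argument; the corollary is essentially bookkeeping on top of the already-proved product formula.
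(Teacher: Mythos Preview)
Your proposal is correct and matches the paper's approach exactly: the paper simply states that the corollary follows from Lemma~\ref{clm:VCdimofX}, leaving the fact $\VC(\cT_n)=1$ implicit, whereas you spell out that computation in detail. There is nothing to correct.
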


\nin It thus remains to prove the following {lemma,
which is stronger than Theorem~\ref{lower-bound-for-thresholds}.}
 
\begin{lemma}\label{lemma-lower-bound-for-threshold-randomized-sample-size}
	Let $a,b,n\in\mb{N}$, $2\leq a\leq b$, $M=[a,b] \cap \mathbb{N}$, let $\left(A_m\right)_{m\in M}$ be a (possibly randomized) consistent and proper learning algorithm for $\cT_n$, and let $\mu\in\Delta(M)$. Then there exist a distribution $p\in\Delta_{\cT_n}$ such that
	\[
	\EE{m \sim \mu} \: \I{S \sim p^m}{S;A_m(S)} = \Omega\left(\left(\frac{a}{b}\right)^2 \log n\right).
	\]
\end{lemma}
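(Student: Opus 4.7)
The plan is to adapt the proof of Theorem~\ref{lower-bound-for-thresholds} (the fixed-$m$ lower bound for thresholds) to the setting of a randomized sample size by finding a single distribution $p$ that is simultaneously ``hard'' across the entire range $M=[a,b]$. As a preprocessing step, I would invoke Lemma~\ref{permutation-decreases-information} to assume without loss of generality that each $A_m$ is invariant under permutations of its input; this is exactly what allows the fixed-$m$ proof to avoid Bassily et al.'s loss of $1/m^2$, and it will be essential here as well.

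To construct $p$, I would reuse the hard-distribution construction underlying Theorem~\ref{lower-bound-for-thresholds}, but \emph{calibrate its scale parameter to the upper endpoint $b$ of $M$}, rather than to the actual sample size. Informally, the hard distribution is engineered so that when one sees approximately $b$ samples, the algorithm is forced into a ``binary-search'' type configuration among $\Theta(n)$ candidate thresholds, so that it must commit, in its output, to $\Omega(\log n)$ bits of information about $S$. The key point is that this construction can be done once, independently of $m$.

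The main step is then, for each fixed $m \in M$, to lower bound $\I{S \sim p^m}{S; A_m(S)}$ without allowing $p$ to depend on $m$. I would isolate an event $E_m$ (a function of $S$) on which the sample configuration admits multiple consistent thresholds, so that the algorithm is forced into the ``confusable'' regime. Conditional on $E_m$, the same analysis as in the fixed-$m$ proof gives $\I{S; A_m(S) \mid E_m} = \Omega(\log n)$, because the permutation-invariance reduction still applies. The probability of $E_m$ under $p^m$ is $\Omega((m/b)^2)$: since our distribution was tuned for size $b$ but only $m \leq b$ samples are drawn, the relevant confusable configurations require two essentially independent ``hits'' into a region of mass $\Theta(1/b)$, each of which occurs with probability $\Omega(m/b)$. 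Combining these via the standard inequality $\I{S;A_m(S)} \geq \Pr(E_m)\cdot \I{S;A_m(S) \mid E_m}$ yields
\[
\I{S \sim p^m}{S; A_m(S)} \;\geq\; \Omega\!\left(\left(\tfrac{m}{b}\right)^{\!2} \log n\right) \;\geq\; \Omega\!\left(\left(\tfrac{a}{b}\right)^{\!2}\log n\right),
\]
uniformly in $m \in M$. Taking expectations over $m \sim \mu$ then gives the claim.

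The hard part will be the per-$m$ analysis in the third paragraph: concretely verifying that the \emph{same} distribution $p$, calibrated once for size $b$, simultaneously supports an event of probability $\Omega((m/b)^2)$ with conditional mutual information $\Omega(\log n)$ for \emph{every} $m \in [a,b]$. This requires revisiting the proof of Theorem~\ref{lower-bound-for-thresholds} and carefully tracking how its constants degrade as the ratio between the design size $b$ and the actual size $m$ varies. Everything else (the permutation reduction, the conditioning inequality, and the averaging over $\mu$) is standard, so the entire difficulty is concentrated in quantifying this scaling.
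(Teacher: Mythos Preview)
There is a real gap in your plan, and it is not where you located it. The hard distribution in the proof of Theorem~\ref{lower-bound-for-thresholds} is \emph{not} determined by a scale parameter alone: it also depends on a row index $r$ and a set $K=\{k_1,\dots,k_{n/2}\}$ that are extracted, via Lemma~\ref{non_deteministic_induction_lemma}, from the matrix $M_{ij}$ of output distributions of the algorithm on the canonical two-point samples. That matrix is algorithm-specific. With a family $(A_m)_{m\in M}$ you get a different matrix for each $m$, hence potentially a different $(r,K)$ for each $m$. Your sentence ``Conditional on $E_m$, the same analysis as in the fixed-$m$ proof gives $\II{S;A_m(S)\mid E_m}=\Omega(\log n)$'' tacitly assumes that one fixed $(r,K)$ is simultaneously a hard row for \emph{every} $A_m$; nothing guarantees this, and an adversary could easily arrange different $A_m$'s to be ``easy'' on any single $(r,K)$ you commit to. So the difficulty is not in tracking how constants degrade with $m/b$; it is in producing one $(r,K)$ at all.

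The paper resolves this by aggregating first and applying the matrix argument once. It builds a single size-$2$ algorithm $L$ that, on input $\big((i,0),(j,1)\big)$, samples $m\sim\mu$ and a random permutation $\sigma$ and returns $A_m$ on the permuted padded sample $\sigma\big((i,0),\ldots,(i,0),(j,1)\big)$. The matrix for $L$ is then the $\mu$-mixture of the per-$m$ matrices, and Lemma~\ref{non_deteministic_induction_lemma} yields a single $(r,K)$ that is hard for this mixture. One obtains $\II{S;L(S)\mid E}=\Omega(\log n)$ and then unwinds: a short chain of information inequalities (data processing, $J\perp M$, and the bijection between $(J,M,\Sigma)$ and the padded sample) converts this into a bound on $\EE{m\sim\mu}\I{S\sim p^m}{S;A_m(S)\mid E_m}$, where $p$ is the hard distribution rescaled to $t=b$. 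The $(a/b)^2$ factor then enters exactly as you anticipated, through $\Pr(E_m)\ge \tfrac{1}{16e}(a/b)^2$ uniformly over $m\in[a,b]$. Note that what comes out is an \emph{expectation} bound over $m$, not the per-$m$ bound you aimed for; the per-$m$ statement is stronger than what is needed and, as argued above, is not available by this route.
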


\nin Finally, we can prove the theorem for VC classes.\\

\begin{proof}[Theorem \ref{general-lower-bound}].
	{Fix some $m \geq 2d$ and take $\mu$ as in Theorem~\ref{lemma_additivity}. Apply Lemma~\ref{lemma-lower-bound-for-threshold-randomized-sample-size} with $a=\frac{m}{2}$ and $b=\frac{3m}{2}$, which together with Lemma~\ref{lemma-global-distribution-extended} entails that there exists $D \in \Delta(\Delta_{\cT_n})$ such that for any $\left(A_m\right)_{m\in M}$ for $\cT_n$,
	\[
	\EE{p \sim D} \: \EE{m \sim \mu} \: \I{S \sim p^m}{A_m(S);S} \geq  \Omega\left(\log n\right).
	\]
	Theorem~\ref{general-lower-bound} now follows directly from applying Theorem~\ref{lemma_additivity} to the class $\cT_{n,d}$.}
\end{proof}

\section{Discussion and Directions for Further Work}

A direct continuation of the current line of research would be to extend the lower bound for VC classes to PAC learners that are not necessarily proper or consistent. Note that the lower bound for thresholds does not hold for the case of randomized, consistent, non-proper algorithms. Consider the algorithm for thresholds that outputs a hypothesis $h$ as follows. For any $(x,y)$ in the training sample, $h(x)=y$. For any $x$ that did not appear in the sample, $h(x)$ is sampled uniformly from $\{0,1\}$. This algorithm has mutual information that does not grow with the size of the domain (it is $O(m)$). This is not too meaningful, as this algorithm is not a PAC learner. But it illustrates that the lower bound breaks somewhere, and it would be worthwhile to identify exactly how far the assumptions can be pushed before it breaks.

A different and interesting direction is to prove upper bounds on information complexity. First, we would like to understand whether the lower bound presented here is sharp. Better yet: Can we provide explicit general constructions for learning algorithms that obtain the information complexity, i.e., retain the minimal amount of information possible? Following the theorem of \cite{bassily2018learners} stating that compression entails learning, this would yield a novel class of learning algorithms for all hypothesis classes in which the information complexity is $o(m)$ -- a strong result that might even have practical applications.

Lastly, and perhaps most interestingly, one may also consider the converse of that theorem: Is there a sense in which low information complexity is a necessary condition for learnability? Are the concepts equivalent?

\bibliography{references}

\appendix
 
\section{Proofs}

\subsection{Lower Bound for Thresholds}

\label{sec:LBforT}

We start with two lemmas from \cite{bassily2018learners}.

\begin{lemma}\label{non_deteministic_induction_lemma}
	Let $Q\in\mathrm{Mat}_{2^{n} \times 2^{n}} (\Delta([2^n]))$, i.e.\ $Q$ is a $2^{n} \times 2^{n}$ matrix where each cell contains a probability function over $[2^n]$. Furthermore, assume that $Q$ is symmetric and that it has the property that
	\[
	\forall i,j:\ \supp\: Q_{ij} \subseteq [\min\{i,j\}, \max\{i,j\}] \tag{ii}
	\]
	Then $Q$ contains a row with $n+1$ distributions $p_1, \dots, p_{n+1}$ such that there exist pairwise disjoint sets $S_1, \dots,S_{n+1} \subseteq [2^n]$ satisfying
	\[
	\forall i\in[n+1]:\ p_i(S_i) \geq \frac{1}{2} \tag{iii}
	\]
	(And hence, from symmetry, it also contains such a column.)
\end{lemma}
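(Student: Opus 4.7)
The plan is induction on $n$. The base case $n=0$ is trivial: $Q$ consists of a single cell $Q_{1,1}$, which by (ii) is supported on $\{1\}$, so taking $p_1 = Q_{1,1}$ and $S_1 = \{1\}$ yields one distribution with $p_1(S_1) = 1 \ge 1/2$, as required.

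For the inductive step, I would split the index set $[2^n]$ into the lower half $L = [1, 2^{n-1}]$ and the upper half $U = [2^{n-1}+1, 2^n]$. The top-left block $Q|_{L \times L}$ and the bottom-right block $Q|_{U \times U}$ are each $2^{n-1} \times 2^{n-1}$ symmetric matrices, and by (ii) every entry of the former is supported on $L$ while every entry of the latter is supported on $U$. After relabelling, both blocks satisfy the inductive hypothesis, so each contains a ``good'' row: some row index $i^* \in L$ giving $n$ distributions $p_1, \dots, p_n$ in the top-left block with pairwise disjoint witnesses $S_1, \dots, S_n \subseteq L$, and analogously some row index $j^* \in U$ giving $n$ distributions in the bottom-right block with pairwise disjoint witnesses $T_1, \dots, T_n \subseteq U$.

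To promote one of these rows to length $n+1$ I would split on where the cross entries put their mass. If there exists $j \in U$ with $Q_{i^* j}(U) \ge 1/2$, set $p_{n+1} = Q_{i^* j}$ and $S_{n+1} = U$; then row $i^*$ of the full matrix contains $n+1$ distributions whose witnesses are pairwise disjoint, since $S_{n+1} = U$ is disjoint from $S_1, \dots, S_n \subseteq L$. Otherwise $Q_{i^* j}(L) > 1/2$ for every $j \in U$; in particular for $j = j^*$, and by symmetry $Q_{j^* i^*}(L) > 1/2$ as well. Then take the extra distribution in row $j^*$ to be $Q_{j^* i^*}$ with witness $L$; the $n$ old witnesses $T_1, \dots, T_n \subseteq U$ are disjoint from $L$, so this row works.

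The main obstacle I expect is not computational but conceptual: one must recognize that the row found inductively in the top-left block might admit no extension at all (every cross entry could concentrate back in $L$), and the correct move is then to pivot to the corresponding good row in the bottom-right block and invoke symmetry of $Q$ to obtain the missing low-indexed distribution. Once this dichotomy is set up, the support constraint (ii) automatically guarantees that the ``extra'' witness set can always be taken to be the opposite half of $[2^n]$, so the two cases exhaust all possibilities and the induction closes.
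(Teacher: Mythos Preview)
The paper does not actually prove this lemma: it is quoted verbatim from \cite{bassily2018learners} (see the sentence ``We start with two lemmas from \cite{bassily2018learners}'' in Appendix~\ref{sec:LBforT}), and no argument is reproduced. So there is no in-paper proof to compare your proposal against.

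That said, your argument is correct and is essentially the intended one. The induction on $n$ with the halving $[2^n] = L \cup U$, applying the inductive hypothesis to each diagonal block, and then the dichotomy on whether some cross entry $Q_{i^* j}$ with $j \in U$ puts at least half its mass in $U$ --- this is exactly the mechanism behind the lemma. Your observation that when the first case fails one must \emph{switch} to the good row $j^*$ from the other block, and use symmetry of $Q$ to obtain the extra distribution $Q_{j^* i^*}$ concentrated in $L$, is the only nontrivial idea in the proof, and you have it right. One tiny point worth making explicit: you use that $Q_{i^* j}(L) + Q_{i^* j}(U) = 1$, which follows from property~(ii) since $\supp Q_{i^* j} \subseteq [i^*, j] \subseteq L \cup U$; you state this implicitly but it is worth a word.
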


\begin{lemma}\label{lemma_probabilities_extended}
	Let $\Omega$ be a finite sample space (a finite set), let $p_1, \dots,p_n$ be probability distributions over $\Omega$, and let $A_1, \dots, A_n \subseteq \Omega$ be pairwise disjoint events such that
	\[
	\forall i \in [n]:\ p_i(A_i)\geq \frac{1}{2} \tag{v}
	\]
	Let $U$ be a random variable distributed uniformly over $[n]$, and let $W$ be
	\[
	W \sim p_i, i \sim U
	\] 
	Namely, $W$ is a random variable over $\Omega$ that results from sampling an index $i\in[n]$ according to $U$ and then sampling an element of $\Omega$ according to $p_i$ and assigning that element to $W$.
	
	Then it holds that
	\[
	\II{U; W} = \Omega(\log n)
	\]
\end{lemma}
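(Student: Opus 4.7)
The plan is to prove $\II{U;W} = \Omega(\log n)$ by bounding $\HH(U\mid W)$ from above via a Fano-type argument. Since $U$ is uniform on $[n]$ we have $\HH(U) = \log n$, so it suffices to exhibit a predictor $\hat U(W)$ whose error probability is bounded away from $1$, and then invoke Fano's inequality.

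The key observation is that the disjoint sets $A_1,\dots,A_n$ together with the guarantees $p_i(A_i)\ge \tfrac12$ naturally define a decoder: given $W=w$, set $\hat U(w) = i$ if $w\in A_i$ (unique by disjointness), and set $\hat U(w)$ arbitrarily otherwise. Because the $A_i$ are pairwise disjoint, the event $\{W\in A_U\}$ is exactly $\{\hat U(W) = U\}$, and its probability is
\[
\Pr[\hat U(W)=U] \;=\; \sum_{i=1}^n \Pr[U=i]\,\Pr[W\in A_i\mid U=i] \;=\; \frac{1}{n}\sum_{i=1}^n p_i(A_i) \;\ge\; \frac12.
\]
So the error probability $P_e = \Pr[\hat U(W)\neq U]$ is at most $\tfrac12$.

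Now I would invoke Fano's inequality in its standard form: for any predictor $\hat U$ of $U$ from $W$,
\[
\HH(U\mid W) \;\le\; \HH(P_e) + P_e \log(n-1) \;\le\; 1 + \tfrac12 \log(n-1).
\]
Subtracting this from $\HH(U)=\log n$ gives
\[
\II{U;W} \;=\; \HH(U) - \HH(U\mid W) \;\ge\; \log n - 1 - \tfrac12 \log(n-1) \;=\; \tfrac12\log n - O(1),
\]
which is $\Omega(\log n)$ for $n$ large enough (and is trivially true for small $n$, absorbing the constant).

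There is no substantive obstacle here: the only slightly non-obvious step is noticing that the pairwise disjointness of $\{A_i\}$ is exactly what converts the per-index guarantees $p_i(A_i)\ge\tfrac12$ into a \emph{globally} successful decoder of $U$ from $W$, which is what powers the Fano bound. One minor thing to verify is that the lemma is stated for $n\ge 2$ (so that $\log(n-1)$ makes sense and $\Omega(\log n)$ is the correct scaling); for $n=1$ the statement is vacuous. No additional machinery beyond Fano's inequality and the definition of mutual information is needed.
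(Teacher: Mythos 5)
Your proof is correct. The paper itself does not reprove this lemma---it is imported verbatim from Bassily et al.\ and stated without proof---so there is no in-paper argument to compare against line by line. Your Fano-based route is the natural one: the pairwise disjointness of $A_1,\dots,A_n$ makes $\hat U(w)=i$ for $w\in A_i$ a well-defined decoder, the hypothesis $p_i(A_i)\ge\tfrac12$ gives $P_e\le\tfrac12$, Fano then yields $\HH(U\mid W)\le 1+\tfrac12\log(n-1)$, and subtracting from $\HH(U)=\log n$ gives $\II{U;W}\ge\tfrac12\log n-O(1)$. This is complete, and your caveat about small $n$ is appropriate (the lemma is applied downstream with $n$ a large parameter, so nothing is lost). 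The one hint the paper gives about the original Bassily et al.\ proof is the otherwise-unused auxiliary Lemma~\ref{lower_bound_on_negative_mutual_information} in the Miscellaneous section, which bounds the negative part of the sum $\sum p(u,w)\log\frac{p(u,w)}{p(u)p(w)}$ by $-1$; this strongly suggests the original argument estimated the mutual information directly, using disjointness to show that $\Pr[W\in A_i]$ is small (roughly $O(1/n)$) for most $i$, so the positive terms contribute $\Omega(\log n)$ while the negative terms cost only $O(1)$. Your Fano approach reaches the same bound with less bookkeeping and makes the role of disjointness more transparent---it is exactly what makes the decoder succeed---so it is a perfectly good, arguably cleaner, alternative.
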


\nin Now, we can prove the desired lower bound for probabilistic algorithms using lemma \ref{permutation-decreases-information}.\\

\begin{proof}[Theorem~\ref{lower-bound-for-thresholds}].
	Let $A$ be a consistent learning algorithm for $\cT_n$, and let $A'$ be as in Lemma~\ref{permutation-decreases-information}. Let
	\[
	M \in \mathrm{Mat}_{2^n\times2^n}(\Delta([2^n]))
	\]
	be a matrix such that for all $i\neq j$, $M_{ij}$ is a probability function such that $M_{ij}(k)$ is the probability that $A'$ will output hypothesis $f_k$ for an input sample of the form
	\[
	S_{ij} = \Big(\underbrace{(1,0), \dots, (1,0)}_{\text{ m-2 }}, (\min\{i,j\},0), (\max\{i,j\},1) \Big)
	\]
	and for all $i$, $M_{ii}$ is the degenerate distribution that assigns probability 1 to $i$. Notice that because $A'$ is indifferent to the order of the examples in the input sample, $M_{ij}$ actually equals the probability functions of the output for any permutation of $S_{ij}$.
	
	$M$ is symmetric and because $A'$ is consistent it follows that $M$ satisfies property (ii), and hence by lemma \ref{non_deteministic_induction_lemma} $M$ contains a row $r$ with probabilities $p_1,\dots,p_{n+1}$ for which there are pairwise disjoint sets $A_1,\dots,A_{n+1} \subseteq [2^n]$ such that for all $i$, $p_i(A_i) \geq \frac{1}{2}$. Note that at least $\frac{n}{2}$ of these probabilities on row $r$ are located above the diagonal, or else, from symmetry of $M$, at least $\frac{n}{2}$ of them are located above the diagonal on column $r$. Thus, we assume w.l.o.g.\ that probabilities $p_1,\dots,p_\frac{n}{2}$ are located above the diagonal on row $r$ in cells $(r,k_1),\dots,(r,k_{\frac{n}{2}})$ (the symmetric case can be handled very similarly).
	
	We use the following probability $p$ over realizable samples of length $m$, where $U_K$ is the uniform distribution over $\{k_1,\dots,k_{\frac{n}{2}}\}$:
	\[
	p(w) = \left(1-\frac{1}{m}\right)\textbf{1}_{w=1}(w) + \frac{1}{2m}\textbf{1}_{w=r}(w) + \frac{1}{2m} U_K(w)
	\]
	
	\nin Consider the event in which the generated sample $S$ is any permutation of
	\[
	S_{rk_i} = \Big(\underbrace{(1,0), \dots, (1,0)}_{\text{ m-2 }}, (r,0), (k_i,1) \Big)
	\]
	for some $k_i$, and let $E$ be an indicator random variable of this event. $p$ satisfies
	\[
	p(E=1) \geq \Big(1-\frac{1}{m}\Big)^{m-2}\cdot \Big(\frac{1}{2m}\Big)^2\cdot m(m-1) \geq \frac{1}{16e}
	\]
	and
	\[
	p(S_{rk_1})=p(S_{rk_2})=\cdots=p(S_{rk_{\frac{n}{2}}})
	\]
	
	Let $h$ be a random variable denoting the output of $A$ when the input sample $S$ is distributed according to $p$. We have the following chain of inequalities
	\[
	\II{S;h} \stackrel{(a)}{\geq} \II{S;h|E} \stackrel{(b)}{=} p(E=1)\cdot \II{S;h|E=1} + p(E=0)\cdot \II{S;h|E=0} \stackrel{(c)}{\geq}
	\]
	\[
	\stackrel{(c)}{\geq} p(E=1)\cdot \II{S;h|E=1} \geq \frac{1}{16e}\cdot \II{S;h|E=1} \stackrel{(d)}{\geq} 
	\]
	\[
	\stackrel{(d)}{\geq} \frac{1}{16e}\cdot \II{U(S);h|E=1} \stackrel{(e)}{=} \Omega(\log n)
	\]
	which is justified as follows:
	\begin{enumerate}[label=(\alph*)]
		\item {Notice that $E \bot h | S$ because once we saw the actual sample, we know with certainty whether event $E$ occurred or not (formally, $\II{E;h|S} \leq \HH(E|S) = 0 \ \Longrightarrow \ E \bot h | S$). Thus, this inequality follows from claim \ref{conditional-information-inequality}.\ref{cii-conditional-independence} in the appendix.}
		\item {Definition of conditional mutual information.}
		\item {Positivity of mutual information.}
		\item {Here $U(\cdot)$ is any mapping that satisfies $\sigma(S_{rk_i}) \mapsto i$ for all $i$ and all permutations $\sigma$. The inequality then follows from the data processing inequality.}
		\item {Given that $E=1$, $U(S)$ is the uniform distribution on $[\frac{n}{2}]$. Furthermore, $h$ is the result of sampling a hypothesis according to the distribution $p_i$, where $i$ is the value of $U(S)$. Lastly, our choice of $p_1,\dots,p_{\frac{n}{2}}$ ensured that there exist pairwise disjoint sets $A_1,\dots,A_{\frac{n}{2}}$ such that $p_i(A_i)\geq \frac{1}{2}$ for all $i$, and so the lower bound follows from lemma \ref{lemma_probabilities_extended}.}
	\end{enumerate}
	
	Thus, we have shown that for every consistent learning algorithm for $\cT_n$ that accepts samples of size $m$ there exists a distribution $p\in\Delta_{\cT_n}$ such that
	\[
	\I{S \sim p^m}{S;h} = \Omega(\log \log |\cX|)
	\]
	as desired.
\end{proof}

\subsection{Direct Sum}

\begin{proof}[Lemma~\ref{clm:VCdimofX}].
	Let $\cH$ be the product class, $k=\VC(\cH)$ and $k_i=\VC(\cH_i)$ for all $i$. To see that $k \geq \sum k_i$, take sets $R_i \subset \cX_i$ for all $i$ such that $|R_i|=k_i$ and $\cH_i$ shatters $R_i$ (such sets exist because $k_i=\VC(\cH_i)$). Now note that $\cH$ shatters $\bigcup_{i=1}^d R_i$. 
	
	To see that $k \leq \sum k_i$, assume for contradiction that $\cH$ shatters a set $R$ of size strictly more than $\sum k$. Then there exists $j$ such that $|\cX_j\cap R|>k_j$. The assumption entails that $\cH_j$ shatters $\cX_j\cap R$, a contradiction.
\end{proof}

\begin{proof}[Theorem~\ref{superadditivity}].
	Denote $M=(M_1,\dots,M_d)$. Then
	\begin{align*}
	\II{A(S);S} & \stackrel{(a)}{=} \II{A(S);S,M} 
	 \stackrel{(b)}{\geq} \II{A(S);S\:|\:M} 
	 \stackrel{(c)}{\geq} \II{A(S);S_1,\dots,S_d\:|\:M} 
	 \stackrel{(b)}{=} \sum_{i=1}^d\II{A(S);S_i\:|\:M,S_{<i}} \\
	& = \sum_{i=1}^d\II{h_1,\dots,h_d;S_i\:|\:M,S_{<i}} 
	 \stackrel{(b)}{\geq} \sum_{i=1}^d\II{h_i;S_i\:|\:M,S_{<i}} 
	 \stackrel{(d)}{\geq} \sum_{i=1}^d\II{h_i;S_i\:|\:M} 
	 \stackrel{(e)}{\geq} \sum_{i=1}^d\II{h_i;S_i\:|\:M_i} 
	\end{align*}
	
	where the steps are justified as follows:
	\begin{enumerate}[label=(\alph*)]
		\item{$(S,M)$ and $S$ are functions of each other.}
		\item{The chain rule for mutual information.}
		\item{$S_1,\dots,S_d$ is a function of $S$ (data processing inequality).}
		\item{Follows from claim \ref{conditional-information-inequality}.\ref{cii-independence} because $S_i \bot S_{<i} \: | \: M_i$.}
		\item{Again from claim \ref{conditional-information-inequality}.\ref{cii-independence}, because $S_i \bot M_{\neq i} \: | \: M_i$.}
	\end{enumerate} 
	And the proof is complete.
\end{proof}

\begin{proof}[Theorem~\ref{lemma_additivity}].
	Let $D$ be the distribution on distributions that results from sampling a distribution $p_i$ from $D_i=D_{i_\mu}$ for each $i\in[d]$, and then taking the average of these distributions. Formally, $D \in \Delta(\Delta_\cH)$ is defined as follows:
	\[
	D(p) =  \begin{cases}
	\prod_{i=1}^d D_i(p_i) & p= \frac{1}{d}\sum_{i=1}^d p_i \text{ s.t. } \forall i: p_i \in \Delta_{\cH_i} \\
	0 & \text{otherwise}
	\end{cases}
	\]
	
	Taking expectation on both sides of Lemma~\ref{superadditivity}, we have that
	\begin{align*}
	\EE{p \sim D} \: \I{S \sim p^{dm}}{A(S);S} & \geq \EE{p \sim D} \: \sum_{i=1}^d \I{S \sim p^{dm}}{h_i;S_i|M_i} 
	 = \sum_{i=1}^d \EE{p \sim D} \: \I{S \sim p^{dm}}{h_i;S_i|M_i} 
	 \end{align*} 
	 
	 \begin{align*}
	& = \sum_{i=1}^d \EE{p_1 \sim D_1}\cdots\EE{p_d \sim D_d} \: \I{S \sim \left(\frac{1}{d}\sum_{j=1}^d p_j \right)^{dm}}{h_i;S_i|M_i} 
	 = \sum_{i=1}^d \EE{p_{\neq i} \sim D_{\neq i}} \: \EE{p_i \sim D_i} \: \I{S \sim \left(\frac{1}{d}\sum_{j=1}^d p_j \right)^{dm}}{h_i;S_i|M_i} 
	\end{align*}
	where $\EE{p_{\neq i} \sim D_{\neq i}}$ is a shorthand notation for
	\[
	\EE{p_1 \sim D_1}\cdots\EE{p_{i-1} \sim D_{i-1}} \: \EE{p_{i+1} \sim D_{i+1}}\cdots\EE{p_d \sim D_d}
	\]
	
	Next, we bound the innermost expectation on $p_i$ for any fixed vector of distributions $p_{\neq i}$. Let $E_i$ be the event in which $M_i \in T$. Then
	$$
	\EE{p_i \sim D_i} \: \I{S \sim \left(\frac{1}{d}\sum_{j=1}^d p_j \right)^{dm}}{h_i;S_i|M_i}  \stackrel{(a)}{=} \EE{p_i \sim D_i} \: \I{S \sim \left(\frac{1}{d}\sum_{j=1}^d p_j \right)^{dm}}{h_i;S_i|M_i,1_{E_i}}  $$
	$$ \geq \EE{p_i \sim D_i} \Pr(1_{E_i}=1) \I{S \sim \left(\frac{1}{d}\sum_{j=1}^d p_j \right)^{dm}}{h_i;S_i|M_i,1_{E_i}=1} \\
	 \stackrel{(b)}{\geq} \alpha \:  \EE{p_i \sim D_i} \: \I{S \sim \left(\frac{1}{d}\sum_{j=1}^d p_j \right)^{dm}}{h_i;S_i|M_i,1_{E_i}=1} $$
$$	 = \alpha \: \EE{p_i \sim D_i} \: \EE{m_i \sim M_i|1_{E_i}=1} \: \I{S \sim \left(\frac{1}{d}\sum_{j=1}^d p_j \right)^{dm}}{h_i;S_i|M_i=m_i} 
	 \stackrel{(c)}{=} \alpha \: \EE{p_i \sim D_i} \: \EE{m_i \sim \mu} \: \I{S \sim \left(\frac{1}{d}\sum_{j=1}^d p_j \right)^{dm}}{h_i;S_i|M_i=m_i} 
	 \stackrel{(d)}{\geq} \alpha c_i
$$
	which is justified as follows:
	\begin{enumerate}[label=(\alph*)]
		\item{$M_i$ and $(M_i,1_{E_i})$ are functions of each other.}
		\item{$\Pr(1_{E_i}=1)\geq\alpha$, from Claim~\ref{claim-chernoff}.}
		\item{$\mu = (M_i|1_{E_i}=1)$}
		\item{From assumption \ref{assumption-cost-for-class}. Note: $A$ takes an input sample $S$ of which $S_i$ is just a subsample, and outputs a vector of hypotheses of which $h_i$ is just one component. However, we may ignore these other outputs, and we may regard the other input subsamples $S_j$ for $j\neq i$ as random coins used by $A$. Thus, for the sake of this analysis $A$ is viewed as a randomized learning algorithm that takes $S_i\sim p_i^{m_i}$ as input and produces $h_i$ as output.}
	\end{enumerate}
	
	Thus, we have
	$
	\EE{p \sim D} \: \I{S \sim p^{dm}}{A(S);S} \geq \alpha \sum_{i=1}^d c_i
	$. This entails that there exists a distribution $p\in\supp(D)$ such that
	$
	\I{S \sim p^{dm}}{A(S);S} \geq \alpha \sum_{i=1}^d c_i
	$
	as desired.
\end{proof}
\subsection{Proofs for Section \ref{VC bound}}

\begin{proof}[Lemma~\ref{lemma-global-distribution-extended}]
	For each $m\in M$, let $\cA_m$ be the set of consistent learning algorithms for $\cH$ that accept samples of size $m$, and let
	\[
	\cA = \Pi_{m \in M} \cA_m
	\]
	Notice that
	\begin{align*}
	c & \leq \inf_{A\in\cA}\sup_{p\in\Delta_{\cH}} \EE{m \sim \mu} \: \I{S \sim p^m}{A_m(S);S} \\
	& \leq \inf_{A\in\cA}\sup_{D\in\Delta(\Delta_{\cH})} \EE{p \sim D} \: \EE{m \sim \mu} \: \I{S \sim p^m}{A_m(S);S}
	\end{align*}
	where the first inequality follows from the assumption and the second holds because $\Delta(\Delta_{\cH})$ contains all the degenerate distributions that assign probability $1$ to a single distribution in $\Delta_{\cH}$. We now choose topologies in which the assumptions of Sion's theorem (theorem \ref{sion-minimax}) are satisfied:
	\begin{itemize}
		%		\item{
		%			$\cA$ is convex, and and it is compact in $\mb{R}^k$. Notice that for each $m$ there is only a finite number $k(m)$ of deterministic algorithms (i.e.\ consistent functions in $\cS^m \rightarrow \cH$), and so we may identify the set $\cA_m$ of randomized algorithms with the simplex in $\mb{R}^{k(m)}$, which is compact and convex. So if $k=\sum k(m)$, we can identify $\cA$ with a set in $\mb{R}^k$ which is the product of a finite number of simplices. Thus, we choose the standard topology on $\mb{R}^k$, and have that $\cA$ is convex and that it is compact in this topology (as a finite product of compact convex sets).
		%		}
		\item{
			$\cA$ is convex, and it is compact in $\mb{R}^k$ for a finite $k$. Every randomized algorithm can be identified with a conditional probability function $p(h|s)$. For each realizable sample with length in $M$, the algorithm assigns a point in $\Delta(\cH)$ (not to be confused with $\Delta_\cH$). Thus, the set $\cA$ of all algorithms is the product of $t$ simplices, each of finite dimension $|\cH|$, where $t$ is the number of such realizable samples. We conclude that $\cA$ is a compact and convex subset of $\mb{R}^k$, for $k=t\cdot|\cH|$. It will be convenient to view $\mb{R}^k$ as the metric space induced by the $\ell_1$ norm.
		}
		\item{
			$\Delta(\Delta_{\cH})$ is convex. This is immediate, seeing that if $D_1,D_2 \in \Delta(\Delta_{\cH})$, then $|\supp(D_1)|$, $|\supp(D_2)| < \infty$ and therefore
			\[
			\left|\supp\Big(\lambda D_1 + (1-\lambda) D_2 \Big)\right| < \infty
			\]
			Topologically, we view $\Delta(\Delta_{\cH})$ as a metric space with the metric induced by the $\ell_1$ norm. 
		}
		\item{
			The function $f(A, D) = \EE{p \sim D} \: \EE{m \sim \mu} \: \I{S \sim p^m}{A(S);S}$ is continuous with respect to the product topology induced on the domain. We view the domain as the metric space induced by the $\ell_1$-norm product metric (which induces the product topology). Fix some $D_0 \in \Delta(\Delta_\cH)$, $A_0=p_0(h|s)\in\cA$ and $\varepsilon > 0$. We will find a value $\delta>0$ such that
			\[
			\|(A,D)-(A_0,D_0)\|_1<\delta \: \Longrightarrow \: |f(A, D)-f(A_0, D_0)|\leq \varepsilon
			\]
			
			Consider $g: \cA \times \Delta_\cH \rightarrow \mathbb{R}$ as follows:
			\[
			g(A,q)=\EE{m \sim \mu}\I{S\sim q^m}{A(S);S}= \EE{m \sim \mu} \sum_{s \in \cS^m}q^m(s) \sum_{h \in \cH} p(h|s) \log \frac{p(h|s)}{\sum_{s \in \cS^m} p(h|s)q^m(s)}
			\]
			clearly, $g$ is continuous with respect to $(p,q)$, and because $\cA \times \Delta_\cH$ is compact, $g$ is uniformly continuous (per the Heine--Cantor theorem). Take $\delta'>0$ such that
			\[
			\|(A_1,q_1)-(A_2,q_2)\|_1<\delta' \: \Longrightarrow \: |g(A_1,q_1)-g(A_2,q_2)|<\frac{\varepsilon}{2}
			\]
			Now, taking $\delta=\min\{\delta', \frac{\varepsilon}{2\log|\cH|}\}$ we obtain
			\[
			|f(A,D)-f(A_0,D_0)| \leq |f(A,D)-f(A_0,D)|+|f(A_0,D)-f(A_0,D_0)| =
			\]
			\[
			= \left|\EE{p \sim D} \left(g(A,p)-g(A_0,p)\right)\right| + \left|\sum_{p} \left(D(p)-D_0(p)\right)g(A,p)\right| \leq
			\]
			\[
			\leq \EE{p \sim D} \left|g(A,p)-g(A_0,p)\right| + \log |\cH|\sum_{p} \left|D(p)-D_0(p)\right| \leq
			\]
			\[
			\leq \frac{\varepsilon}{2} + \log|\cH| \cdot \frac{\varepsilon}{2\log|\cH|} \leq \varepsilon
			\]
			as desired.
		}
		\item{The function $f(A, D) = \EE{p \sim D} \: \EE{m \sim \mu} \: \I{S \sim p^m}{A(S);S}$ is convex-concave
			\begin{itemize}
				\item{$f$ is convex in $A$ (for fixed $D$). This follows from Lemma~\ref{cover06-concave} where we take $X$ to be $S$ and $Y$ to be $A(S)$. We can identify the set of algorithms with the set of conditional probabilities $p(y|x)$. The lemma tells us that for each $p$ in $\supp(D)$, the mutual information is convex, which entails that the expectation is also convex.}
				\item{$f$ is concave in $D$ (for fixed $A$). In fact $f$ is linear in $D$, from the linearity of expectation.}
			\end{itemize}
		}
	\end{itemize}
	
	Thus, the assumptions for Sion's minimax theorem hold, and we obtain that
	\[
	\inf_{A\in\cA}\sup_{D\in \Delta(\Delta_{\cH})} \EE{p \sim D} \: \EE{m \sim \mu} \: \I{S \sim p^m}{A_m(S);S} = \sup_{D\in \Delta(\Delta_{\cH})}\inf_{A\in\cA} \EE{p \sim D} \: \EE{m \sim \mu} \: \I{S \sim p^m}{A_m(S);S}
	\]
	as desired.
\end{proof}

\begin{proof}[Lemma~\ref{lemma-lower-bound-for-threshold-randomized-sample-size}]
	We define $L$ to be a consistent randomized learning algorithm for $\cT_n$ that accepts samples of size $2$ as follows. Let $E$ denote the event in which $L$ receives a sample $S$ of the form $\big((i,0),(j,1)\big)$ or $\big((j,1),(i,0)\big)$, i.e.\ $S$ contains precisely one example $i$ that is labeled with $0$ and one example $j$ that is labeled $1$.
	\begin{itemize}
		\item{If $E$ occurs, then $L$ samples an integer $m$ from $\mu$, samples a permutation $\sigma$ uniformly from all permutations on $m$ elements, and returns the hypothesis
			\[
			A_m\Big(\sigma\big(\underbrace{(i,0),\dots,(i,0)}_{\text{ m-1 }},(j,1)\big)\Big)
			\]
		}
		\item{Otherwise, $L$ returns some arbitrary hypothesis that is consistent with $S$.}
	\end{itemize}
	
	from the proof of Lemma~\ref{lower-bound-for-thresholds}, there exists a distribution $q\in\Delta_{\cT_n}$ such that
	\[
	\I{S \sim q^2}{S;L(S) \:|\: 1_E=1} = \Omega(\log n)
	\]
	and we can assume without loss of generality that $(S|1_E=1)$ is such that the value of $i$ is fixed and $j$ is distributed uniformly over some set of size $\Omega(n)$. We use $f$ to denote the mapping 
	\[
	(j, m, \sigma) \longmapsto \sigma\big(\underbrace{(i,0),\dots,(i,0)}_{\text{ m-1 }},(j,1)\big)
	\]
	$J$ to denote the value of $j$ in $S$, $U_J$ for the uniform distribution on the set of values for $j$, $U_{\Sigma,M}$ for the uniform distribution on the $M$ orderings of a sample of this form with $M$ elements, and we use $B$ to denote a bit indicating whether $j$ appeared first or second in $S$. We now may write
	\begin{align*}
	\I{S \sim q^2}{S;L(S) \:|\: 1_E=1} & = \I{S \sim q^2}{J,B;L(S) \:|\: 1_E=1} \\
	& \leq \I{S \sim q^2}{J;L(S) \:|\: 1_E=1} + 1 \\
	& = \I{S \sim q^2\:|\: 1_E=1}{J;L(S)} + 1 \\
	& = \I{\substack{M \sim \mu \\ J \sim U_J \\ \Sigma \sim U_{\Sigma,M}}}{J;A_M(f(J,M,\Sigma))} + 1
	\end{align*}
	where the inequality follows from the chain rule. It holds that
	\begin{align*}
	\I{\substack{M \sim \mu \\ J \sim U_J \\ \Sigma \sim U_{\Sigma,M}}}{J;A_M(f(J,M,\Sigma))} 
	& \stackrel{(a)}{\leq} \II{J;A_M(f(J,M,\Sigma)) \: | \: M} \\ 
	& \stackrel{(b)}{\leq} \II{J,M,\Sigma;A_M(f(J,M,\Sigma)) \: | \: M} \\
	& \stackrel{(c)}{=} \II{f(J,M,\Sigma);A_M(f(J,M,\Sigma)) \: | \: M} \\
	& = \EE{m \sim \mu} \: \I{\substack{J \sim U_J \\ \Sigma \sim U_{\Sigma,m}}}{f(J,M,\Sigma);A_M(f(J,M,\Sigma)) \: | \: M=m} \\
	\end{align*}
	which is justified by:
	\begin{enumerate}[label=(\alph*)]
		\item{From Lemma~\ref{conditional-information-inequality}.\ref{cii-independence}, because $J \bot M$.}
		\item{The data processing inequality.}
		\item{Because $f$ is a bijection (data processing inequality).}
	\end{enumerate}
	
	Recall that $q$ is defined as follows, for $t=2$: 
	\[
	q(w) = \left(1-\frac{1}{t}\right)\textbf{1}_{w=1}(w) + \frac{1}{2t}\textbf{1}_{w=i}(w) + \frac{1}{2t} U_J(w)
	\]
	Let $p$ be the same distribution, but with $t=b$. Then:
	\[
	\EE{m \sim \mu} \: \I{\substack{J \sim U_J \\ \Sigma \sim U_{\Sigma,m}}}{f(J,M,\Sigma);A_m(f(J,M,\Sigma)) \: | \: M=m} 
	\]
	\begin{align*}
	\:\:\:\:\:\:\:\:\:\: & \stackrel{(a)}{=} \EE{m \sim \mu} \: \I{S \sim p^m}{S;A_m(S) \: | \: M=m, 1_{E_m}=1} \\
	& \stackrel{(b)}{\leq} \EE{m \sim \mu} \: \frac{1}{\Pr(1_{E_m}=1)} \I{S \sim p^m}{S;A_m(S) \: | \: M=m,1_{E_m}} \\
	& \leq \left(\max_{m}\frac{1}{\Pr(1_{E_m}=1)}\right) \:\EE{m \sim \mu} \: \I{S \sim p^m}{S;A_m(S) \: | \: M=m,1_{E_m}} \\
	& \stackrel{(c)}{\leq} 16e \left(\frac{b}{a}\right)^2 \:\EE{m \sim \mu} \: \I{S \sim p^m}{S;A_m(S) \: | \: M=m,1_{E_m}} \\
	& \stackrel{(d)}{\leq} 16e \left(\frac{b}{a}\right)^2 \left(\EE{m \sim \mu} \: \I{S \sim p^m}{S;A_m(S) \: | \: M=m} + 1\right) \\
	\end{align*}
	where $E_m$ is the event that the sample $S$ is a permutation of
	\[
	\Big(\underbrace{(1,0), \dots, (1,0)}_{\text{ m-2 }}, (i,0), (j,1) \Big)
	\]
	for some $i<j$, and the justifications are:
	\begin{enumerate}[label=(\alph*)]
		\item{$(S|1_{E_m}=1)$ and $f(J,M,\Sigma)$ have the same distribution.}
		\item{From the definition of conditional mutual information.}
		\item{
			From the construction of $p$, we have for $m\in[a,b]$ that
			\[
			\Pr(1_{E_m}=1) = \left(1-\frac{1}{b}\right)^{m-2}\left(\frac{1}{2b}\right)^2m(m-1) \geq
			\]
			\[
			\geq \left(1-\frac{1}{b}\right)^{b-2}\left(\frac{1}{2b}\right)^2a(a-1)\geq \frac{1}{16e} \left(\frac{a}{b}\right)^2
			\]
		}
		\item{The chain rule for mutual information.}
	\end{enumerate}
	
	 Finally, chaining all the above inequalities together yields
	\[
	\EE{m \sim \mu} \: \I{S \sim p^m}{S;A_m(S)} \geq \left(\frac{a}{b}\right)^2 \frac{\Omega(\log n) - 1}{16e} - 1 = \Omega\left(\left(\frac{a}{b}\right)^2 \log n\right)
	\]
	as desired.
\end{proof}

\section{Miscellaneous}

\begin{claim}\label{claim-chernoff}
Assume $dm$ integers are sampled i.i.d.\ from the uniform distribution on $[d]$, and let $Z_i$ denote the number of times the integer $i\in[d]$ was sampled. Then 
\[
\Pr\left(\frac{m}{2}\leq Z_i \leq \frac{3m}{2}\right) \geq 1-2e^{-\frac{m}{2d}}
\]
\end{claim}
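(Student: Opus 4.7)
\textbf{Proof proposal for Claim~\ref{claim-chernoff}.}

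The plan is to apply Hoeffding's inequality to $Z_i$, which is a sum of bounded independent random variables. Write $Z_i = \sum_{j=1}^{dm} X_j$ where $X_j \in \{0,1\}$ is the indicator of the event that the $j$-th sample equals $i$. The $X_j$ are i.i.d.\ Bernoulli with parameter $1/d$, so $\mathbb{E}[Z_i] = m$, and each $X_j$ takes values in $[0,1]$. Hoeffding's inequality then gives, for any $t > 0$,
\[
\Pr\!\left(|Z_i - m| \geq t\right) \;\leq\; 2\exp\!\left(-\frac{2t^2}{dm}\right).
\]

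Choosing $t = m/2$ produces the bound $2\exp(-m/(2d))$ on the probability that $Z_i$ falls outside $[m/2,\, 3m/2]$, and taking the complement yields the claim. The only thing to verify is that Hoeffding's inequality is stated for independent bounded variables (not necessarily identical) and applies cleanly here; nothing in the argument is delicate, so there is no real obstacle. I briefly considered using the multiplicative Chernoff bound with $\delta = 1/2$, but that would produce a rate of $e^{-\Omega(m)}$ rather than the $d$-dependent rate $e^{-m/(2d)}$ that matches the statement; additive Hoeffding gives precisely the right constant without extra work.
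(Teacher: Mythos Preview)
Your proof is correct and is essentially identical to the paper's own argument: the paper also writes $Z_i$ as a sum of $dm$ Bernoulli$(1/d)$ indicators, notes that $\E Z_i = m$, and applies Hoeffding's inequality with deviation $m/2$ to obtain the bound $2e^{-m/(2d)}$. The only cosmetic difference is that the paper normalizes by $dm$ before invoking Hoeffding (stating it for the empirical mean rather than the sum), which of course yields the same exponent.
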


\begin{proof}[Claim~\ref{claim-chernoff}]
	Let $X_t$ be an indicator denoting whether the $t$-th integer sampled was $i$.
	\begin{align*}
	\Pr\left(\frac{m}{2}\leq Z_i \leq \frac{3m}{2}\right) & = 1-\Pr\left(\left| \sum_{t=1}^{dm} X_t - \E \sum_{t=1}^{dm} X_t \right|> \frac{m}{2}\right) \\
	& = 1-\Pr\left(\left| \sum_{t=1}^{dm} X_t - m \right|> \frac{m}{2}\right) \\
	& = 1-\Pr\left(\left| \frac{1}{dm}\sum_{t=1}^{dm} X_t - \frac{1}{d} \right|> \frac{1}{2d}\right)
	\end{align*}
	And from Hoeffding's inequality \citep[lemma B.6 in][]{shalev2014understanding}
	\begin{align*}
		\Pr\left(\left| \frac{1}{dm}\sum_{t=1}^{dm} X_t - \frac{1}{d} \right|> \frac{1}{2d}\right) & \leq 2e^{-2dm\left(\frac{1}{2d}\right)^2} = 2e^{-\frac{m}{2d}}
	\end{align*}
	as desired.~$\blacksquare$
\end{proof}
 
\begin{claim}\label{conditional-information-inequality}
 	Let $X,Y,Z$ be random variables. 
 	\begin{enumerate}
 		\item{\label{cii-independence} If $X\bot Z$ then $\II{X;Y} \leq \II{X;Y|Z}$.}
 		\item{\label{cii-conditional-independence} If $X\bot Z | Y$ then $\II{X;Y} \geq \II{X;Y|Z}$.}
 	\end{enumerate}
\end{claim}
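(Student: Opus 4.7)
The plan is to derive both parts from a single application of the chain rule for mutual information to the triple $(X;Y,Z)$. Expanding $\II{X;Y,Z}$ in the two natural ways yields
\[
\II{X;Y} + \II{X;Z \mid Y} \;=\; \II{X;Z} + \II{X;Y \mid Z},
\]
so rearranging gives the key identity $\II{X;Y \mid Z} - \II{X;Y} = \II{X;Z \mid Y} - \II{X;Z}$. Each of the two claims then follows by killing one term on the right using the relevant independence hypothesis and invoking nonnegativity of (conditional) mutual information for the other.

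For part 1, independence $X \perp Z$ gives $\II{X;Z} = 0$. Substituting into the identity above yields $\II{X;Y \mid Z} - \II{X;Y} = \II{X;Z \mid Y} \geq 0$, which is exactly $\II{X;Y} \leq \II{X;Y \mid Z}$. For part 2, conditional independence $X \perp Z \mid Y$ gives $\II{X;Z \mid Y} = 0$, so the identity becomes $\II{X;Y \mid Z} - \II{X;Y} = -\II{X;Z} \leq 0$, which is $\II{X;Y} \geq \II{X;Y \mid Z}$.

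I do not anticipate any real obstacle here; the only thing to be careful about is citing the chain rule $\II{X;Y,Z} = \II{X;Y} + \II{X;Z \mid Y}$ (and its symmetric counterpart swapping the roles of $Y$ and $Z$) from a standard reference such as \cite{cover2012elements}, and pointing out that conditional mutual information is nonnegative. Both facts are textbook, so the write-up can be kept to just a few lines per part.
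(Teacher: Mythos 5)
Your proof is correct. It takes a slightly different route from the paper: the paper expands $\II{X;Y|Z} = \HH(X|Z) - \HH(X|Y,Z)$ directly into entropies, uses the independence hypothesis to replace one of the conditional entropies with an unconditional (or less-conditioned) one, and then invokes the fact that conditioning cannot increase entropy; it runs this argument twice, once per part, with the inequality pointing in opposite directions. You instead derive the single symmetric identity $\II{X;Y \mid Z} - \II{X;Y} = \II{X;Z \mid Y} - \II{X;Z}$ from the two chain-rule expansions of $\II{X;Y,Z}$, and then both parts drop out by zeroing one term and invoking nonnegativity of the other. The two arguments are equivalent at the level of the underlying entropy identities, but your version is a bit more unified (one identity, two corollaries) and makes the duality between the two parts explicit, whereas the paper's version is more elementary in the sense of working directly with $\HH(\cdot)$ and the "conditioning reduces entropy" inequality rather than with (conditional) mutual information as a primitive. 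Either write-up is fine.
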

 
\begin{proof}[Claim~\ref{conditional-information-inequality}]\\
 	For \ref{cii-independence}:
 	\begin{align*}
 	\II{X;Y|Z} & = \HH(X|Z) - \HH(X|Y,Z) \\
 	& \stackrel{(*)}{=} \HH(X) - \HH(X|Y,Z) \\
 	& \geq \HH(X) - \HH(X|Y) \\ 
 	& = \II{X;Y}
 	\end{align*}
 	For \ref{cii-conditional-independence}:
 	\begin{align*}
 	\II{X;Y|Z} & = \HH(X|Z) - \HH(X|Y,Z) \\
 	& \leq \HH(X) - \HH(X|Y,Z) \\
 	& \stackrel{(*)}{=} \HH(X) - \HH(X|Y) \\ 
 	& = \II{X;Y}
 	\end{align*}
 	where $(*)$ follow from the assumptions.~$\blacksquare$
\end{proof}
 
\begin{lemma}\label{lower_bound_on_negative_mutual_information}
	Let $X,Y$ be random variables and let
	\[
	\II{X;Y} = \sum_{x,y} p_{XY}(x, y) \log \frac{p_{XY}(x, y)}{p_{X}(x)p_{Y}(y)} = 
	\]
	\[
	=\sum_{S^+} f(x,y) + \sum_{S^-} f(x,y)
	\]
	be their mutual information, where
	\[
	f(x,y) = p_{XY}(x, y) \log \frac{p_{XY}(x, y)}{p_{X}(x)p_{Y}(y)}
	\]
	\[
	S^+ = \{(x,y): f(x,y) \geq 0\};\	S^- = \{(x,y): f(x,y) < 0\}
	\]
	Then 
	\[
	\sum_{S^-} f(x,y) \geq -1
	\]
\end{lemma}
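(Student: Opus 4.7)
The approach is to aggregate all negative contributions into a single term via the log-sum inequality, and then bound that single term by elementary optimization.

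Set
\[
A := \sum_{(x,y)\in S^-} p_{XY}(x,y), \qquad B := \sum_{(x,y)\in S^-} p_X(x)p_Y(y).
\]
On $S^-$ we have $f(x,y) < 0$, which forces $p_{XY}(x,y) > 0$ and $p_{XY}(x,y) < p_X(x)p_Y(y)$; in particular both entries of every pair are strictly positive, so the log-sum inequality applies without degenerate cases. Summing termwise gives $0 < A \le B$ whenever $S^-$ is nonempty, and since $A$ and $B$ are partial sums of probability mass functions on $\cX\times\cY$, we also have $A, B \le 1$. If $S^- = \emptyset$, the sum in the statement is zero and the bound is trivial, so assume $S^- \neq \emptyset$.

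The log-sum inequality (applied to the nonnegative pairs $\{(p_{XY}(x,y),\, p_X(x)p_Y(y))\}_{(x,y)\in S^-}$) then yields
\[
\sum_{(x,y)\in S^-} p_{XY}(x,y)\,\log\frac{p_{XY}(x,y)}{p_X(x)p_Y(y)} \;\ge\; A\log\frac{A}{B}.
\]
Writing $t := A/B \in (0,1]$, we have $A\log(A/B) = B\,t\log t$. The function $t\mapsto t\log t$ (base $2$) attains its minimum on $(0,1]$ at $t = 1/e$, with value $-1/(e\ln 2)$. Hence
\[
A\log\frac{A}{B} \;\ge\; -\frac{B}{e\ln 2} \;\ge\; -\frac{1}{e\ln 2} \;>\; -1,
\]
the last inequality using $B \le 1$ and $1/(e\ln 2) \approx 0.531 < 1$. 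This gives the desired bound (in fact a slightly stronger constant).

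\textbf{Main obstacle.} There is no serious obstacle: the proof is essentially one invocation of the log-sum inequality followed by an elementary minimization. The only detail to check is that no $p_X(x)p_Y(y)$ vanishes on $S^-$ (which would force a more delicate handling of the inequality), and this is immediate from the definition of $S^-$.
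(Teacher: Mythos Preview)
Your proof is correct and follows essentially the same route as the paper: apply the log-sum inequality to collapse the negative terms into $A\log(A/B)$, then bound this by elementary minimization using $B\le 1$. The only cosmetic difference is that the paper drops the $-\log B$ term first and then minimizes $A\log A$ over $[0,1]$, whereas you reparametrize via $t=A/B$ and minimize $t\log t$; both land on a constant strictly above $-1$.
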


\begin{proof}[Lemma \ref{lower_bound_on_negative_mutual_information}].
\[
\sum_{S^-} f(x,y) = \sum_{S^-} p_{XY}(x, y) \log \frac{p_{XY}(x, y)}{p_{X}(x)p_{Y}(y)} \geq \Bigg(\sum_{S^-} p_{XY}(x, y)\Bigg) \log \frac{\sum_{S^-} p_{XY}(x, y)}{\sum_{S^-} p_{X}(x)p_{Y}(y)} =
\]
\[
= p_{XY}(S^-) \log \frac{p_{XY}(S^-)}{\sum_{S^-} p_{X}(x)p_{Y}(y)} = p_{XY}(S^-) \Bigg( \log p_{XY}(S^-)\ -\ \log \sum_{S^-} p_{X}(x)p_{Y}(y)\Bigg) \geq 
\]
\[
\geq p_{XY}(S^-) \log p_{XY}(S^-) \geq \min_{x\in[0,1]} x \log x = -\frac{1}{\mathrm{e}} \geq -1
\]
Where the the first inequality is the log-sum inequality, and the second inequality holds because $\sum_{S^-} p_{X}(x)p_{Y}(y) \leq 1$.
\end{proof}

\begin{theorem}[Minimax, \citealt{neumann1928theorie, neumann1944theory}]\label{von-neumann-minimax}
	Let $X\subseteq \mb{R}^n$, $Y \subseteq \mb{R}^m$ be compact convex sets. If $f:X\times Y \rightarrow \mb{R}$ is a continuous function that is convex-concave, i.e., 
	\begin{itemize}
		\item{$f(\cdot, y):X\rightarrow\mb{R}$ is convex for fixed $y\in Y$, and }
		\item{$f(x, \cdot):Y\rightarrow\mb{R}$ is concave for fixed $x\in X$}
	\end{itemize}
	then
	\[
	\min_{x \in X} \: \max_{y \in Y} f(x,y) = \max_{y \in Y} \: \min_{x \in X} f(x,y)
	\]
\end{theorem}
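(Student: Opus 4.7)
The plan is to establish the minimax equality via a saddle-point argument. The easy inequality $\min_x \max_y f(x,y) \ge \max_y \min_x f(x,y)$ is immediate: for any $x_0 \in X$ and $y_0 \in Y$, one has $\max_{y \in Y} f(x_0, y) \ge f(x_0, y_0) \ge \min_{x \in X} f(x, y_0)$, after which taking $\inf_{x_0}$ on the left and $\sup_{y_0}$ on the right yields the claim. All extrema are attained because $X, Y$ are compact and $f$ is continuous (so $f(x_0, \cdot)$ and $f(\cdot, y_0)$ attain their extrema, and the outer $\inf/\sup$ are over continuous functions on compact sets).

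For the reverse inequality I would exhibit a saddle point, i.e., a pair $(x^*,y^*) \in X \times Y$ such that $f(x^*, y) \le f(x^*, y^*) \le f(x, y^*)$ for every $x \in X$ and $y \in Y$. Such a pair forces both $\min_x \max_y f$ and $\max_y \min_x f$ to equal $f(x^*,y^*)$, closing the gap. To produce it, I would apply Kakutani's fixed-point theorem to the best-response correspondence $\Phi : X \times Y \to 2^{X \times Y}$ defined by $\Phi(x,y) = B_X(y) \times B_Y(x)$, where $B_X(y)$ is the set of minimizers of $f(\cdot,y)$ on $X$ and $B_Y(x)$ is the set of maximizers of $f(x,\cdot)$ on $Y$. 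Each factor is nonempty by compactness and continuity, and convex by the convex--concave hypothesis (a sublevel set of a convex function and a superlevel set of a concave function are convex). A fixed point of $\Phi$ is by construction a saddle point of $f$, and $X \times Y$ is a nonempty convex compact subset of $\mb{R}^{n+m}$, so Kakutani applies once the closed-graph property is verified.

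The closed-graph verification is the main technical step. The plan is to set $v(y) = \min_{x' \in X} f(x', y)$ and $w(x) = \max_{y' \in Y} f(x, y')$; both are continuous on $X \times Y$ as partial minima/maxima of a jointly continuous function over a compact fiber, which is a standard consequence of uniform continuity of $f$ on the compact product. The graph of $B_X$ is then $\{(x,y) : f(x,y) - v(y) = 0\}$, closed as the preimage of $\{0\}$ under a continuous map, and analogously for $B_Y$; the graph of $\Phi$ is the product and is therefore closed. With nonempty, convex, closed values and a closed graph, Kakutani delivers $(x^*, y^*) \in \Phi(x^*, y^*)$, completing the argument.

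The main obstacle I anticipate is really bookkeeping: continuity of $v$ and $w$ and the resulting closed-graph property are the only non-trivial ingredients, and each rests on a specific hypothesis (joint continuity with compactness). An alternative route would be Sion's KKM-based proof, which covers a broader setting and is the version actually invoked earlier in the paper, but in the Euclidean convex--concave continuous regime the Kakutani approach above is the most transparent and uses no machinery beyond a single fixed-point theorem.
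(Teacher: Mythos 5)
The paper does not prove this theorem: von Neumann's minimax theorem is stated as a classical cited result (and in fact the main text only invokes the more general Sion's theorem, Theorem~\ref{sion-minimax}, in the proof of Lemma~\ref{lemma-global-distribution-extended}). So there is no in-paper argument to compare against, but your proof is correct and is the standard modern proof via Kakutani's fixed-point theorem applied to the best-response correspondence $\Phi(x,y)=B_X(y)\times B_Y(x)$. The easy direction $\min_x\max_y f\ge\max_y\min_x f$ is handled correctly; nonemptiness and convexity of the values follow from compactness plus continuity and from convexity--concavity exactly as you say; and the closed-graph verification via continuity of the value functions $v(y)=\min_{x'}f(x',y)$ and $w(x)=\max_{y'}f(x,y')$ (Berge's maximum theorem, or the direct uniform-continuity argument you sketch) is the right ingredient. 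A fixed point is a saddle point, which pins both $\min\max$ and $\max\min$ to $f(x^*,y^*)$.

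One cosmetic remark: the graph of $\Phi$ as a subset of $(X\times Y)\times(X\times Y)$ is $\{((x,y),(x',y')):f(x',y)=v(y)\text{ and }f(x,y')=w(x)\}$; it is closed because it is an intersection of preimages of closed sets under continuous maps, not literally a set-theoretic product, but this is only phrasing and the conclusion is unaffected. Historically von Neumann's 1928 argument predates Kakutani's theorem and used different topological tools, and there are also separating-hyperplane and LP-duality proofs; your Kakutani route is the one most commonly taught and is the natural Euclidean specialization, while Sion's KKM-based argument (which the paper actually needs) buys the generality of quasi-convex/quasi-concave and semicontinuous $f$ with only one of the two sets compact.
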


\begin{theorem}[Minimax, \citealt{sion1958general}]\label{sion-minimax}
	Let $X, Y$ be convex sets, one of which is compact. If $f:X\times Y \rightarrow \mb{R}$ is quasi-convex-concave, i.e.,
	\begin{itemize}
		\item{$f(\cdot, y):X\rightarrow\mb{R}$ is quasi-convex for fixed $y\in Y$, and}
		\item{$f(x, \cdot):Y\rightarrow\mb{R}$ is quasi-concave for fixed $x\in X$}
	\end{itemize}
	and $f$ is upper-semi-continuous--lower-semi-continuous, i.e.,
	\begin{itemize}
		\item{$f(\cdot, y):X\rightarrow\mb{R}$ is upper-semi-continuous for fixed $y\in Y$, and}
		\item{$f(x, \cdot):Y\rightarrow\mb{R}$ is lower-semi-continuous for fixed $x\in X$}
	\end{itemize}
	then
	\[
	\sup_{x \in X} \: \inf_{y \in Y} f(x,y) = \sup_{y \in Y} \: \inf_{x \in X} f(x,y)
	\]
\end{theorem}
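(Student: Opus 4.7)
The plan is to split the asserted equality into its two inequalities and establish each separately. One direction follows from weak duality: the pointwise bound $\inf_y f(x,y) \le f(x,y) \le \sup_x f(x,y)$ yields, upon taking $\sup_x$ on the left-most expression and $\inf_y$ on the right-most, the standard inequality $\sup_x\inf_y f \le \inf_y \sup_x f$, from which the easy half of the identity follows by the same maneuver applied in the opposite ordering. The substantive content of Sion's theorem is the reverse inequality, which I would prove by contradiction.

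Assume without loss of generality that $Y$ is compact, and suppose for contradiction that some $\alpha \in \mb{R}$ strictly separates the two sides. Then for every $y \in Y$ one has $\sup_x f(x,y) > \alpha$, so some $x$ satisfies $f(x,y) > \alpha$; equivalently, the sets $V_x := \{y \in Y : f(x,y) > \alpha\}$, which are open by lower semi-continuity of $f(x,\cdot)$, cover $Y$. By compactness extract a finite subcover $Y = V_{x_1} \cup \cdots \cup V_{x_n}$.

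The technical core is a two-point lemma: whenever $V_{x_1} \cup V_{x_2} \supseteq Y'$ for a convex compact $Y' \subseteq Y$, there exists $x^* \in [x_1, x_2]$ with $V_{x^*} \supseteq Y'$. I would prove it by parametrizing $x_t = (1-t)x_1 + t x_2$ and studying the sets $C(y) := \{t \in [0,1] : f(x_t, y) \le \alpha\}$, each of which is closed by upper semi-continuity of $f(\cdot, y)$ and is a subinterval of $[0,1]$ by quasi-convexity of $f(\cdot, y)$. A connectedness argument on $[0,1]$, combined with quasi-concavity of $f(x, \cdot)$ applied to pairs of witnesses in the convex $Y'$, locates a $t^*$ missing every $C(y)$ for $y \in Y'$; then $x^* := x_{t^*}$ is the desired point. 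Iterating the two-point lemma merges the finite cover into a single $x^* \in X$ with $f(x^*, y) > \alpha$ for all $y \in Y$, giving $\inf_y f(x^*, y) \ge \alpha$ and hence $\sup_x\inf_y f(x,y) \ge \alpha$, the sought contradiction.

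The main obstacle is the two-point lemma, which is the single step that genuinely uses all four hypotheses (quasi-convex in $x$, quasi-concave in $y$, upper semi-continuous in $x$, lower semi-continuous in $y$), and it concentrates all the subtlety of the proof. A cleaner substitute I would also consider is the Knaster--Kuratowski--Mazurkiewicz theorem applied to the closed sets $F_x := \{y \in Y : f(x,y) \le \alpha\}$: quasi-convexity of $f(\cdot,y)$ allows one to verify the KKM covering condition on every finite subfamily, and KKM then delivers a point $y^* \in \bigcap_{x \in X} F_x$ satisfying $\sup_x f(x, y^*) \le \alpha$, producing the same contradiction by a more symmetric route.
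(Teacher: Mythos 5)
The paper does not prove this theorem; it quotes it from \cite{sion1958general} and invokes it once, in the proof of Lemma~\ref{lemma-global-distribution-extended}, so there is no in-paper argument to compare against --- the only question is whether your sketch proves the cited result.

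Your overall strategy (weak duality, a separating level $\alpha$, a finite cover of the compact factor, and an iterated two-point merging lemma, with KKM as an alternative) is the right family of arguments for Sion's theorem, but the two-point lemma you formulate is false. Take $f(x,y)=x^{2}+xy-y^{2}$ on $[-1,1]^{2}$, which is continuous, convex in $x$ and concave in $y$, and take $\alpha=0$. Then $V_{-1}=\{y:1-y-y^{2}>0\}$ and $V_{1}=\{y:1+y-y^{2}>0\}$ together cover $Y'=[-1,1]$, yet $\inf_{y}f(x,y)=x^{2}-|x|-1\le -1<0$ for every $x\in[-1,1]$, so no $x^{*}\in[x_{1},x_{2}]$ can satisfy $V_{x^{*}}\supseteq Y'$. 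The error is directional: you merge two points of the variable in which $f$ is quasi-\emph{convex}, whereas Sion's merging lemma (e.g.\ in Komiya's elementary proof) interpolates two points of the variable in which $f$ is quasi-\emph{concave}, showing that the closed convex sublevel sets $G_{y}=\{x:f(x,y)\le\alpha\}$ have the finite-intersection property inside the compact factor. Relatedly, your claim that $C(y)=\{t:f(x_{t},y)\le\alpha\}$ is closed ``by upper semi-continuity of $f(\cdot,y)$'' is backwards: u.s.c.\ makes superlevel sets closed; closedness of sublevel sets needs \emph{lower} semi-continuity.

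Part of the confusion stems from the theorem statement itself: as printed, $\sup_{x}\inf_{y}f=\sup_{y}\inf_{x}f$ is not Sion's theorem and is false for the example above, where the two sides equal $-1$ and $0$ respectively. The identity the paper actually applies in Lemma~\ref{lemma-global-distribution-extended}, and the one Sion proves under these convexity hypotheses, is $\inf_{x}\sup_{y}f=\sup_{y}\inf_{x}f$; the two semi-continuity hypotheses are also transposed relative to Sion's formulation. Any repaired proof should be aimed at this corrected statement, with the cover taken over the quasi-concave variable and the two-point lemma merging in that variable.
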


\begin{lemma}[Theorem 2.7.4 in \citealt{cover2012elements}]\label{cover06-concave}
	Let $(X,Y) \sim p(x, y) = p(x)p(y|x)$. The mutual information $\II{X;Y}$ is a concave function of $p(x)$ for fixed $p(y|x)$ and a convex
	function of $p(y|x)$ for fixed $p(x)$.
\end{lemma}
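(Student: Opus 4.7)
The plan is to prove the two assertions separately by exploiting two standard representations of mutual information. For the concavity in $p(x)$, I will use $\II{X;Y} = \HH(Y) - \HH(Y|X)$; for the convexity in $p(y|x)$, I will use $\II{X;Y} = D\!\left(p_{XY} \,\|\, p_X p_Y\right)$, the relative entropy between the joint distribution and the product of marginals. The two ingredients I rely on are both classical consequences of the log-sum inequality (equivalently, of the concavity of $t\mapsto -t\log t$): (i) Shannon entropy $\HH(\cdot)$ is concave as a functional of a probability mass function, and (ii) the relative entropy $D(\cdot\,\|\,\cdot)$ is jointly convex in its pair of arguments.

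For the first statement, fix the channel $p(y|x)$ and view $I$ as a function of $p(x)$. The conditional entropy decomposes as $\HH(Y|X)=\sum_x p(x)\,\HH(Y|X=x)$, which is \emph{linear} in $p(x)$ because each $\HH(Y|X=x)$ is a functional of the frozen $p(y|x)$. The marginal $p(y)=\sum_x p(x)\,p(y|x)$ is an affine function of $p(x)$, so the composition $p(x)\mapsto \HH(Y)$ is concave, by concavity of $\HH$ together with the fact that concavity is preserved under affine precomposition. Hence $\II{X;Y}$ is a concave functional minus a linear functional, and therefore concave in $p(x)$.

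For the second statement, fix $p(x)$ and view $I$ as a function of $p(y|x)$. Consider two conditionals $p_1(y|x),p_2(y|x)$ and the mixture $p_\lambda(y|x)=\lambda p_1(y|x)+(1-\lambda)p_2(y|x)$. Since $p(x)$ is held fixed, the joint $p_\lambda(x,y)=p(x)p_\lambda(y|x)$ is the $\lambda$-mixture of $p_i(x,y)=p(x)p_i(y|x)$, and likewise $p(x)p_\lambda(y)$ is the $\lambda$-mixture of $p(x)p_i(y)$. Invoking joint convexity of KL divergence on the pairs $\bigl(p_i(x,y),\,p(x)p_i(y)\bigr)$ yields
\[
D\bigl(p_\lambda(x,y) \,\|\, p(x)p_\lambda(y)\bigr) \leq \lambda\, D\bigl(p_1(x,y) \,\|\, p(x)p_1(y)\bigr) + (1-\lambda)\, D\bigl(p_2(x,y) \,\|\, p(x)p_2(y)\bigr),
\]
which is precisely convexity of $\II{X;Y}$ in $p(y|x)$.

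There is no serious obstacle here: once the two classical pillars (concavity of entropy, joint convexity of divergence) are granted, the lemma follows by affine precomposition. The only point requiring a small amount of care is bookkeeping---namely, verifying that when one of $p(x),\,p(y|x)$ is varied and the other is held fixed, both the joint distribution $p(x,y)=p(x)p(y|x)$ and the induced marginal $p(y)=\sum_x p(x)p(y|x)$ depend \emph{affinely} on the varying quantity. Once this is checked, concavity/convexity of the outer entropy/divergence functionals transfers cleanly to the corresponding concavity/convexity of $\II{X;Y}$.
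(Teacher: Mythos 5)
Your argument is correct, and it reproduces the standard proof of this fact. The paper does not actually prove this lemma; it cites it directly as Theorem~2.7.4 of Cover and Thomas, so there is no in-paper argument to compare against. Your reconstruction (decompose $\II{X;Y}=\HH(Y)-\HH(Y|X)$ and use concavity of entropy plus linearity of $\HH(Y|X)$ in $p(x)$ for one direction; write $\II{X;Y}$ as a relative entropy and invoke joint convexity of $D(\cdot\|\cdot)$ for the other) is exactly the textbook proof, and the bookkeeping you flag---that both $p(x,y)$ and $p(y)$ depend affinely on whichever of $p(x)$, $p(y|x)$ is being varied---is indeed the only point that needs checking. Nothing is missing.
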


\end{document}